\documentclass{article}
\usepackage[preprint]{iclr2027_conference}
\usepackage{times}

\usepackage{amsmath,amssymb,amsthm,latexsym,mathrsfs,stmaryrd}
\newtheorem{proposition}{Proposition}

\usepackage[table]{xcolor}
\usepackage{graphicx,epsfig}
\usepackage{booktabs,tabularx,array,multirow,makecell,hhline}
\usepackage[flushleft]{threeparttable}
\usepackage{float,wrapfig,placeins}
\usepackage{subcaption} % Also provides \captionof.
\usepackage[export]{adjustbox}

\usepackage[color,matrix,arrow,all]{xy}
\usepackage{tikz}
\usetikzlibrary{shapes,calc,decorations.pathmorphing}
\usepackage[most]{tcolorbox}
\usepackage{framed,changepage}
\usepackage{algorithm,algpseudocode}
\usepackage{listings}
\usepackage{paralist,enumerate,enumitem}
\usepackage{xspace,xparse,comment,pifont,balance}

\usepackage{url}
\usepackage{hyperref}
\hypersetup{hidelinks}

\let\cite\citep

\newcommand{\method}{GIST}
\newcommand{\methodfull}{\underline{G}enerator-state-\underline{I}nformed \underline{S}upervision for \underline{T}FMs}
\definecolor{gistImproved}{RGB}{0,47,167}
\definecolor{gistNotImproved}{HTML}{008C95}

\newcommand{\stitle}[1]{\par\noindent\textbf{#1}\enspace\ignorespaces}

\newcommand{\eat}[1]{}

\definecolor{shadecolor}{RGB}{220,220,220}

\definecolor{inputcolor}{RGB}{255,139,35}
\definecolor{outputcolor}{RGB}{120,212,252}
\definecolor{embedcolor}{RGB}{254,127,156}
\definecolor{maskcolor}{RGB}{122,128,255}
\definecolor{ecolor}{RGB}{58,149,54}

\definecolor{highcolor}{RGB}{255,153,153}
\definecolor{midcolor}{RGB}{255,204,204}
\definecolor{lowcolor}{RGB}{204,229,255}

\definecolor{green}{RGB}{0,128,0}

\definecolor{yellow}{RGB}{255,200,18}

\newcommand{\be}{\begin{enumerate}}
\newcommand{\ee}{\end{enumerate}}
\newcommand{\beqn}{\begin{eqnarray*}}
\newcommand{\eeqn}{\end{eqnarray*}}

\makeatletter
    \newcommand\figcaption{\def\@captype{figure}\caption}
    \newcommand\tabcaption{\def\@captype{table}\caption}
\makeatother

\tikzstyle{mybox} = [draw=black, fill=black!5, thick,
   rectangle, rounded corners, inner sep=0pt, inner ysep=6pt]
\tikzstyle{fancytitle} =[fill=black, text=white]

\newtcolorbox{findingbox}[2][]{
  colback=gray!10!white,
  colframe=gray!30!white,
  boxrule=0.2mm,
  left=0mm,
  right=0mm,
  top=0mm,
  bottom=0mm,
  coltitle=red!70!black,
  title={#2},
  #1
}

\definecolor{InsightBlue}{RGB}{60, 96, 170}
\definecolor{InsightBg}{RGB}{245, 248, 255}
\definecolor{InsightBorder}{RGB}{180, 198, 231}

\newtcolorbox{insightbox}{
  enhanced,
  colback=InsightBg,
  colframe=InsightBorder,
  boxrule=0.5pt,
  arc=2.5pt,
  left=2pt,
  right=2pt,
  top=1pt,
  bottom=1pt,
  borderline west={2pt}{0pt}{InsightBlue},
  before skip=4pt,
  after skip=6pt
}

\definecolor{sqlkeyword}{rgb}{0, 0, 0.8}   % Blue for keywords
\definecolor{sqlstring}{rgb}{0.8, 0.4, 0}    % Orange for strings
\definecolor{sqlcomment}{rgb}{0, 0.5, 0}   % Green for comments

\lstdefinestyle{sqlstyle}{
  language=SQL,
  basicstyle=\ttfamily\scriptsize,
  keywordstyle=\color{sqlkeyword}\bfseries,
  stringstyle=\color{sqlstring},
  commentstyle=\color{sqlcomment}\itshape,
  breaklines=true,
  breakindent=0pt,
}

\algrenewcommand{\algorithmiccomment}[1]{\bgroup\color{gray}\textit{// #1}\egroup}
\algrenewcommand{\algorithmicrequire}{\textbf{Input:}}
\algrenewcommand{\algorithmicensure}{\textbf{Output:}}
\algnewcommand{\Func}[1]{\mathtt{#1}}

\definecolor{groupgray}{RGB}{245,245,245}
\definecolor{oursgreen}{RGB}{226,239,218}

\title{\method{}: Generator-state-Informed Supervision for Tabular Foundation Models}
\newcommand{\authorAffiliation}{The Hong Kong University of Science and Technology (Guangzhou)}
\newcommand{\authorAddress}{Guangzhou, China}

\author{%
Zhangyang Peng\textsuperscript{1}\quad
Jingze Shi\textsuperscript{1}\quad
Tongyi Zhang\textsuperscript{1}\quad
Yuyu Luo\textsuperscript{1}%
}

\makeatletter
\g@addto@macro\@thanks{%
    \begingroup
    \footnotetext[1]{%
        \authorAffiliation, \authorAddress.
    }%
    \endgroup
}
\makeatother

\hypersetup{pdfauthor={Zhangyang Peng, Jingze Shi, Tongyi Zhang, Yuyu Luo}}

\begin{document}
\maketitle

\begin{abstract}
Tabular foundation models (TFMs) based on prior-data fitted networks typically use sampled hard query labels, leaving avoidable sampling variability in pretraining. A key observation is that intermediate generator states can reveal label uncertainty beyond a sampled outcome. We propose \methodfull{} (\method{}), which uses these states to average query losses over conditional label distributions.
\method{} jointly designs task generation and query supervision, using entropy-calibrated encodings to obtain tractable, input-consistent targets on supported label-dependent branches. We show that this averaging preserves the population objective and yields no larger gradient covariance at fixed parameters within each generating distribution. Under matched pretraining schedules, \method{} improves mean rank across three architectures on both TabArena and TALENT without changing architecture or inference. Across these benchmarks and architectures, \method{} wins 62.1\% of paired dataset-level comparisons against the corresponding hard-label controls.
\end{abstract}

% !TeX root = ../simrb.tex
\section{Introduction}
\label{sec:introduction}

\suppressfloats[t]
% !TeX root = ../simrb.tex
\begin{figure}[t]
    \centering
    \includegraphics[width=\linewidth]{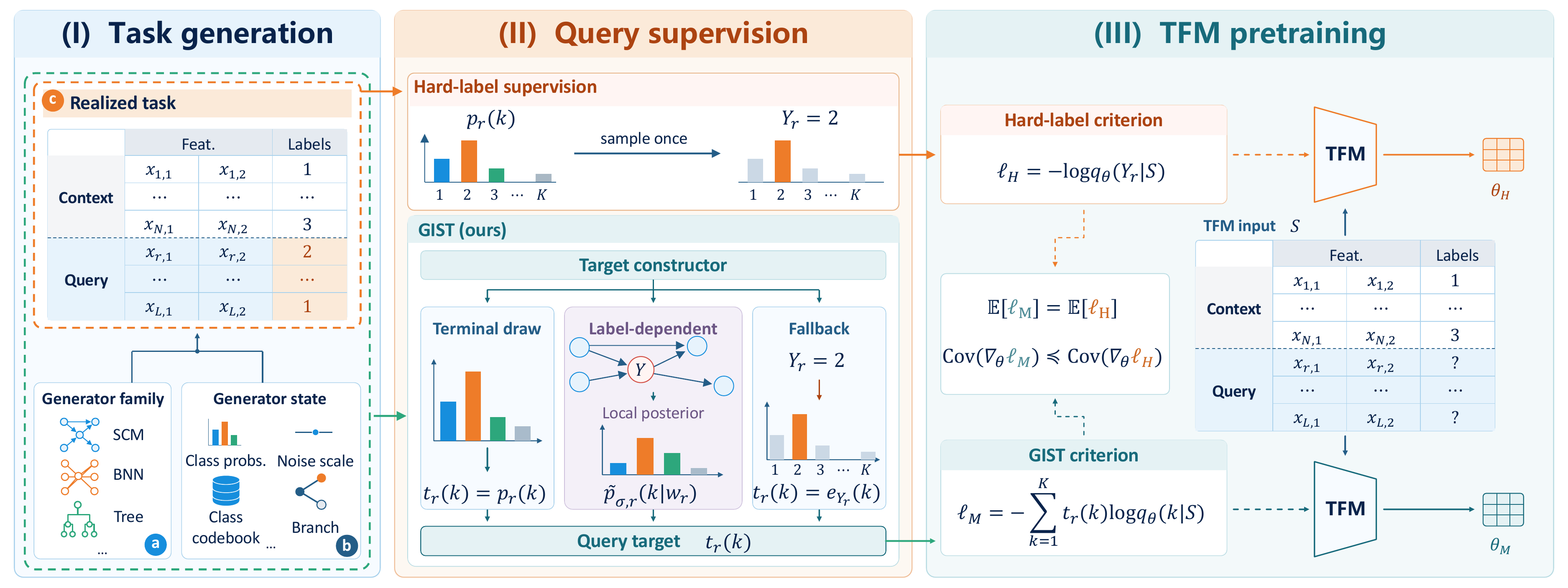}
    \caption{\textbf{Overview of \method{}.}
    (I) Task generation retains state and calibrates supported category encodings.
    (II) Query targets use conditional probabilities, with hard-label fallback.
    (III) The TFM uses standard inputs and hard context labels.
    Objective and raw-gradient covariance comparisons use the same generating distribution at fixed parameters.}
    \label{fig:overview}
    \vspace{-1.5em}
\end{figure}

Tabular prediction estimates a categorical or numerical target from a row of features, supporting applications such as credit assessment and clinical risk prediction.
Tabular foundation models (TFMs) such as TabPFN and TabICL predict targets for query rows through in-context learning, using labeled rows as context~\cite{hollmann2025tabpfn,qu2025tabicl}.
Many follow the prior-data fitted network (PFN) approach: a generator samples a data-generating mechanism from a prior, produces a synthetic table, and supplies context and query rows for pretraining~\cite{muller2022transformers}.

Synthetic priors determine the feature-label relationships encountered during pretraining.
Priors based on Bayesian neural networks (BNNs) and structural causal models (SCMs) provide synthetic tasks for TabPFN~\cite{hollmann2023tabpfn}, while mixtures of neural and tree-based SCMs supply tasks for TabICL~\cite{qu2025tabicl}.
Randomized graph-based SCMs in TabICLv2~\cite{qu2026tabiclv2} and mixtures of complementary priors in Mitra~\cite{zhang2025mitra} further diversify the generated tasks.

Standard PFN pretraining evaluates query predictions against sampled hard labels, yielding an unbiased Monte Carlo estimate of the population loss and its gradient~\cite{muller2022transformers,nagler2023statistical}.
Although synthetic tasks can be generated continually, the model processes only a finite number within its computational budget.
For a fixed input, a sampled label selects one class's loss even when the generator can provide conditional probabilities for all classes.
This raises our central research question: \emph{Can we use information retained during synthetic task generation to reduce the variability introduced by query-label sampling during pretraining?}

We propose \methodfull{} (\method{}), a framework that \textbf{couples synthetic task generation with conditional query supervision}.
Its statistical basis is Rao-Blackwellization (RB)~\cite{CASELLA1996RB}: replacing the sampled hard-label query loss with its conditional expectation given the model input and selected generator information.
This conditioning fixes the prediction, allowing \method{} to average over the remaining query-label uncertainty whenever the conditional probabilities are tractable.
Under a fixed generating distribution, we show that this averaging preserves the population objective and yields no larger gradient covariance at fixed model parameters.
The retained generator information is used only to construct targets; the predictor receives standard context and query inputs (Figure~\ref{fig:overview}).

The target construction depends on how the sampled label enters input generation.
For a query-label draw that does not affect the model input or task acceptance, the sampling probabilities give the target directly.
If the sampled class influences generated features, those features provide additional evidence about the label.
For supported label-dependent branches, \method{} represents the sampled class by a category code perturbed with Gaussian noise and routes all label-dependent feature computations through this encoding. Its local likelihood then yields the posterior target without evaluating a marginal likelihood through the downstream graph.
An entropy budget calibrates the noise scale by bounding average residual label uncertainty in this channel.
This encoding changes the affected branch's feature-label distribution; unsupported branches retain hard-label supervision.

To assess \emph{whether \method{} improves predictive performance across architectures}, we compare it with original-prior hard-label controls under matched pretraining schedules on TabICLv2~\cite{qu2026tabiclv2}, TabPFN-2.5~\cite{grinsztajn2025tabpfn25}, and TabSwift~\cite{liu2026tabswift}.
Evaluation covers IID classification on TabArena and TALENT~\cite{erickson2025tabarena,liu2025talenttoolbox}, and temporal and grouped OOD tasks on BeyondArena~\cite{purucker2026beyondarena}.
At the final matched checkpoints, \method{} improves mean rank and wins on a majority of datasets in all six IID architecture--benchmark combinations.
We separately examine \emph{where and when conditional supervision reduces clipped-gradient variance}, using fixed-checkpoint diagnostics under a shared calibrated prior across training stages, table lengths, and model components.
Finally, a TabSwift budget sweep assesses \emph{how sensitive predictive performance is to the entropy budget}.
Our contributions are as follows:
\begin{itemize}[leftmargin=1em]
    \item We \textbf{jointly design synthetic task generation and query supervision}, combining direct conditional targets with entropy-calibrated Gaussian encodings and their posterior targets.
    \item We establish \textbf{population-objective preservation and a gradient covariance reduction identity} for conditional supervision with hard-label fallback under a fixed generating distribution.
    \item Experiments on over 200 tabular datasets from TabArena and TALENT show that \method{} improves predictive performance across three representative TFM architectures, winning \textbf{62.1\%} of paired dataset-level comparisons against the corresponding hard-label controls.

\end{itemize}

% !TeX root = ../simrb.tex
\section{Preliminaries and Problem Setup}
\label{sec:preliminaries}

We study PFN-style tabular foundation models pretrained on synthetic classification tasks~\cite{hollmann2025tabpfn,qu2025tabicl}.
A generator samples a task mechanism \(T\sim\Pi\) and a context size \(N\sim\Pi_N\), then produces a labeled context \(D_N=\{(X_i,Y_i)\}_{i=1}^{N}\) and a query \((X,Y)\).
The task prior \(\Pi\) specifies a distribution over data-generating mechanisms, such as structural causal models and their parameters.
The predictor receives \(S=(D_N,X,N)\) and outputs class probabilities \(q_\theta(\cdot\mid S)\) over \(K\) classes.
We use one query for notation; with multiple queries, \(S\) includes the full visible input and the loss is averaged over query labels.
Unless stated otherwise, expectations and conditional distributions are taken under the pretraining distribution after preprocessing and task filtering.

The population objective is the expected hard-label cross-entropy~\cite{nagler2023statistical}.
For \(m\) independent synthetic episodes, its unbiased Monte Carlo estimate is
\begin{equation}
    R(\theta)=\mathbb{E}[-\log q_\theta(Y\mid S)],
    \qquad
    \widehat R_m(\theta)
    =-\frac{1}{m}\sum_{j=1}^{m}\log q_\theta(Y_j\mid S_j).
    \label{eq:pretraining-objective}
\end{equation}
Here \(m\) counts synthetic episodes, whereas \(N\) counts context observations.
Task, input, and query-label sampling contribute to gradient variance.

Let \(Z\) collect the full model input \(S\) and selected information retained during task generation, such as task parameters and intermediate encodings.
Define
\begin{equation}
    p_Z(k)=P(Y=k\mid Z),
    \qquad
    \pi_S(k)=P(Y=k\mid S)=\mathbb{E}[p_Z(k)\mid S].
    \label{eq:conditional-label-distributions}
\end{equation}
The posterior predictive distribution \(\pi_S\) minimizes \(R\) among all predictors that depend only on \(S\)~\cite{nagler2023statistical}.
The conditional query distribution \(p_Z\) supplies supervision when tractable, while the predictor receives only \(S\).
At a fixed input \(S\), averaging these targets over the additional generator information recovers \(\pi_S\).
Our goal is to construct synthetic tasks and retained states that make these conditional targets tractable on supported branches (Section~\ref{sec:method}).
Within each fixed generating distribution, we compare conditional averaging with hard-label supervision for the same population objective \(R(\theta)\).

% !TeX root = ../simrb.tex
\section{Method}
\label{sec:method}

\method{} constructs conditional query supervision from retained generator information.
We first define the supervision rule and identify when its targets are directly available, then construct and calibrate an encoding for supported label-dependent branches.
We conclude with the combined training objective and its statistical guarantees.

\subsection{Generator-informed query supervision}
\label{sec:structured-targets}

Let \(Z\) retain the complete model input \(S\) and selected generator information, and write \(p_Z(k)=P(Y=k\mid Z)\) (Section~\ref{sec:preliminaries}).
For a query with hard-label loss \(\ell_{\mathrm H}(\theta;S,Y)=-\log q_\theta(Y\mid S)\), \method{} uses the conditional loss whenever \(p_Z\) is tractable:
\begin{equation}
    \ell_{\mathrm{RB}}(\theta;Z)
    :=\mathbb{E}[\ell_{\mathrm H}(\theta;S,Y)\mid Z]
    =-\sum_{k=1}^{K}p_Z(k)\log q_\theta(k\mid S).
    \label{eq:rb-loss}
\end{equation}
Given \(Z\), the model prediction is fixed, so \(\ell_{\mathrm{RB}}\) averages the hard-label loss over the remaining query-label uncertainty.
Proposition~\ref{prop:rb} shows that this Rao-Blackwellization preserves the population objective and yields no larger gradient covariance under a fixed generating distribution and at fixed model parameters.
The construction of \(p_Z\) must account for the final input, so it depends on whether the sampled query class enters input generation.

During table synthesis, the generator samples a label for each query row.
We call this query-label draw \emph{terminal} when, given the retained pre-draw state, its outcome does not affect the model input.
Its sampling probabilities then give the conditional target directly, provided task filtering is also independent of the remaining query-label randomness under the same conditioning (Appendix~\ref{app:selection}).
This includes converters that pass a continuous state downstream independently of the sampled class, even when the node has observed descendants.
For MLP and tree generators with terminal target noise, we integrate that noise through class boundaries fitted on the context.

When the sampled class affects observed features (Figure~\ref{fig:channel-calibration}(I)), those features supply additional label evidence, so the pre-draw probabilities must be updated.
Direct evaluation of this posterior can require inference through the downstream graph.
For supported label-dependent branches, we jointly design the category encoding and its supervision rule so that the update can be computed from a local likelihood.
Branches without a supported conditional target retain sampled hard-label supervision.

\subsection{Gaussian encoding for label-dependent features}
\label{sec:gaussian-channel}

% !TeX root = ../simrb.tex
\begin{figure}[t]
    \centering
    \includegraphics[width=\linewidth]{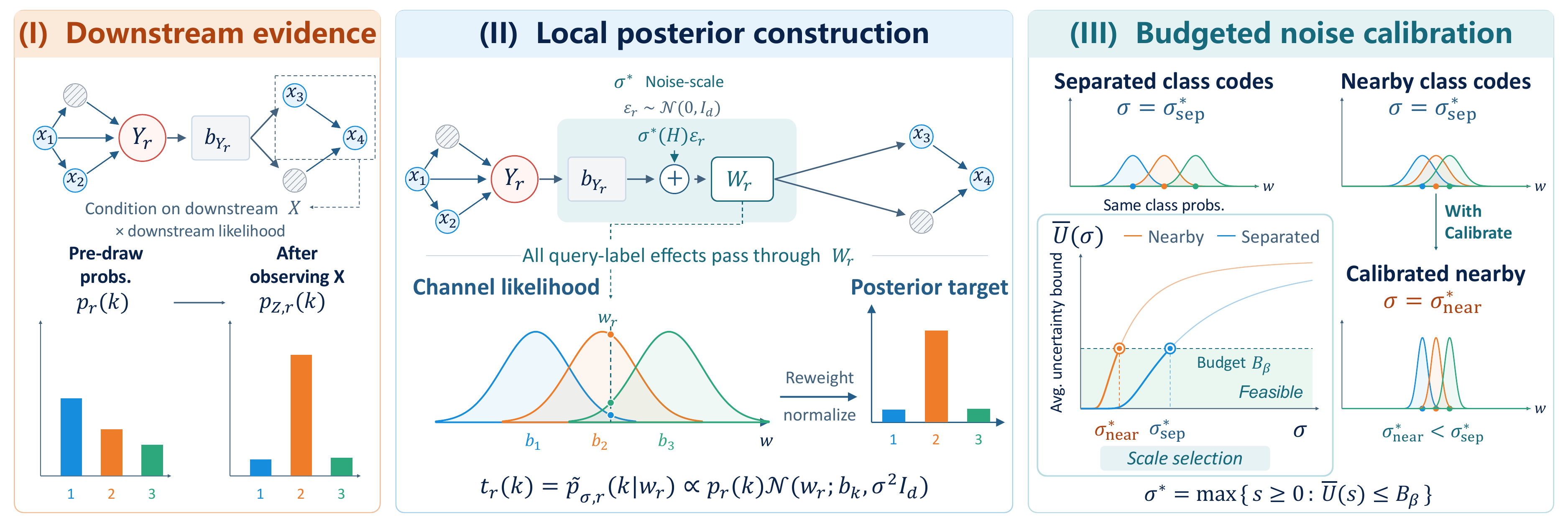}
    \caption{\textbf{Gaussian encoding and entropy-budget calibration.}
    (I) Observing label-dependent features updates class probabilities.
    (II) Routing all query-label effects through a local Gaussian channel makes posterior targets tractable.
    (III) Calibration selects the largest noise scale certified by an entropy budget; nearby codes require a smaller scale in the illustrated example.}
    \label{fig:channel-calibration}
    \vspace{-1.5em}
\end{figure}

In the supported label-dependent converters, a sampled class \(Y_r\) is mapped to a real-valued category code \(b_{Y_r}\) for downstream feature generation.
\method{} adds Gaussian noise to this code and routes all sampled-class effects on downstream features through the resulting encoding \(W_r\) (Figure~\ref{fig:channel-calibration}(II)).
Retaining this encoding allows the conditional target to be computed from its local likelihood.

\begin{samepage}
We describe the encoding channel before task filtering.
Let \(\mathcal H\) be the state before target-category draws, containing a codebook \(B=(b_1,\ldots,b_K)\), with \(b_k\in\mathbb R^d\), and row probabilities \(p_r\).
For \(L=N+Q\) rows, comprising \(N\) context rows and \(Q\) queries, we sample
\begin{equation}
    Y_r\mid\mathcal H\sim\operatorname{Categorical}(p_r),
    \qquad
    W_r=b_{Y_r}+\sigma\varepsilon_r,
    \qquad
    \varepsilon_r\sim\mathcal N(0,I_d).
    \label{eq:gaussian-channel}
\end{equation}
\end{samepage}
Conditional on \(\mathcal H\), the row labels are independent, and the encoding noises are independent across rows and independent of all labels.
The scale \(\sigma=\sigma(\mathcal H)>0\) depends only on pre-draw information and is shared across rows and classes.
The sampled class \(Y_r\) remains the dataset label, while downstream nodes receive the noisy code \(W_r\).
Section~\ref{sec:entropy-budget} specifies how the scale is chosen from \(\mathcal H\).

Retain \(Z_\sigma=(\mathcal H,W_{1:L},Y_{1:N},S)\), excluding query labels and the realized encoding noise \(\varepsilon_{1:L}\).
Conditional on \(\mathcal H\) and the context labels, all query-label effects on the input and task filtering pass through \(W_{1:L}\).
Formally, under the channel law before task filtering,
\[
    Y_{N+1:L}\perp(S,E)\mid(\mathcal H,W_{1:L},Y_{1:N}),
\]
where \(E\) denotes task acceptance.
Once these encodings are fixed, the downstream likelihood is common to all candidate query-label vectors and cancels in Bayes' rule.
The conditional target for query \(r>N\) is therefore
\begin{equation}
    \widetilde p_{\sigma,r}(k)
    =\frac{p_{rk}\exp\!\left[-\|W_r-b_k\|_2^2/(2\sigma^2)\right]}
    {\sum_{j=1}^{K}p_{rj}\exp\!\left[-\|W_r-b_j\|_2^2/(2\sigma^2)\right]}.
    \label{eq:channel-posterior}
\end{equation}
On accepted tasks, \(\widetilde p_{\sigma,r}(k)=P(Y_r=k\mid Z_\sigma)=p_{Z_\sigma,r}(k)\), supplying the conditional probabilities in Equation~\eqref{eq:rb-loss}.
The same channel generates the category representation and defines its supervision target.
This local encoding change modifies the branch's feature-label distribution.
Appendix~\ref{app:channel-posterior} proves the posterior identity after filtering, and Appendix~\ref{app:converter-mapping} gives the GraphSCM instantiations.

\subsection{Entropy-budget calibration}
\label{sec:entropy-budget}

The noise scale controls how much label uncertainty remains after the encoding is observed.
With distinct class codes, an almost deterministic encoding leaves little uncertainty to average, whereas excessive noise weakens the label information carried by the encoding.
A fixed scale may leave little uncertainty for well-separated codes but obscure label information for nearby codes (Figure~\ref{fig:channel-calibration}(III)).
We therefore select the largest scale certified by an entropy budget, adapting it to the task's class probabilities and codebook.

\begin{samepage}
Fix \(\mathcal H=h\) and, under the channel law before task filtering, define the local conditional entropy
\(\mathcal E_r(\sigma)=H(Y_r\mid W_r,\mathcal H=h)\), averaging over labels and encoding noise.
For a candidate noise scale, we evaluate an entropy upper bound from the row probabilities and pairwise code distances before labels and encoding noise are sampled.
The Bhattacharyya lower bound on mixture entropy~\cite{kolchinsky2017mixture} yields
\begin{equation}
    \begin{aligned}
        U_r(\sigma)
        &:=H(p_r)+\sum_i p_{ri}\log\!\left[
            \sum_j p_{rj}\exp\!\left(-\frac{\|b_i-b_j\|_2^2}{8\sigma^2}\right)
        \right],\\
        0&\le\mathcal E_r(\sigma)\le U_r(\sigma)\le H(p_r),
    \end{aligned}
    \label{eq:entropy-upper-bound}
\end{equation}
\end{samepage}
where \(H(p_r)=-\sum_k p_{rk}\log p_{rk}\) and logarithms are natural.

Define the all-row averages \(\overline U(s)=L^{-1}\sum_{r=1}^{L}U_r(s)\) and \(\overline H=L^{-1}\sum_{r=1}^{L}H(p_r)\).
For \(\beta\in(0,1]\), our experiments use the binary-cap budget \(\mathcal B_\beta(h)=\beta\min\{\overline H,\log 2\}\).
The cap limits the residual-entropy allowance to \(\beta\log 2\) when \(\overline H>\log 2\).
Subject to an optional cap \(\sigma_{\max}\in(0,\infty]\), we select the largest finite scale certified by this budget.
Uncapped calibration uses \(0<\beta<1\); when \(\beta=1\), we require a finite \(\sigma_{\max}\).
When a positive finite maximizer exists,
\begin{equation}
    \sigma_\star(\mathcal H)
    =\max\left\{
        s\ge0:\ s\le\sigma_{\max},\quad
        \overline U(s)\le\mathcal B_\beta(\mathcal H)
    \right\},
    \label{eq:entropy-scale}
\end{equation}
using the right-hand limit at \(s=0\).

At a feasible positive scale, Equations~\eqref{eq:entropy-upper-bound} and~\eqref{eq:entropy-scale} give \(L^{-1}\sum_{r=1}^{L}\mathcal E_r(\sigma_\star)\le\overline U(\sigma_\star)\le\mathcal B_\beta(h)\).
Equivalently, the encoding retains at least \(\overline H-\mathcal B_\beta(h)\ge(1-\beta)\overline H\) of label information on average across all rows before task filtering.

Since \(\overline U(s)\) is continuous and nondecreasing for \(s>0\), calibration selects a feasible finite cap directly or locates the budget threshold by bisection.
Infeasible budgets retain the original encoding and sampled hard labels.
Other cases in which no positive scale is returned follow the branch rules in Appendix~\ref{app:entropy-calibration}.
Appendices~\ref{app:entropy-bound}--\ref{app:budget-policies} give the derivations, numerical procedure, and alternative budget policy.

\subsection{Training objective and statistical guarantees}
\label{sec:conditional-supervision}

The preceding constructions supply a target \(t_r=p_{Z,r}\) on supported branches, using either terminal probabilities or the channel posterior, and \(t_r=e_{Y_r}\) on hard-label fallback branches.
Here \(p_{Z,r}(k)=P(Y_r=k\mid Z)\) uses a common retained state \(Z\) for the table, and \(e_{Y_r}\) is the sampled label's one-hot vector.
For \(Q\) queries with predictions \(q_{\theta,r}\), \method{} uses
\begin{equation}
    \ell_{\mathrm M}^{\mathrm{table}}(\theta)
    =-\frac1Q\sum_{r=N+1}^{L}\sum_{k=1}^{K}
        t_r(k)\log q_{\theta,r}(k\mid S).
    \label{eq:gist-objective}
\end{equation}
Targets are held fixed during differentiation.
The loss reuses the model prediction; context labels remain hard, and the architecture and inference procedure are unchanged.

For the single-query losses in Section~\ref{sec:structured-targets}, write \(G_a=\nabla_\theta\ell_a\) and \(V_a=\operatorname{Cov}(G_a)\).
Let \(P\) denote the joint law of \((S,Y,Z)\) induced by a specified generator, calibration rule, and task filter, independent of \(\theta\).
The comparisons below use \(P\) at fixed parameters and model randomness.
We assume integrable losses, finite gradient second moments, and interchangeability of differentiation and conditional expectation (Appendix~\ref{app:rb-assumptions}).

\begin{proposition}[Conditional supervision]
\label{prop:rb}
If \(p_Z(k)=P(Y=k\mid Z)\), then
\begin{equation}
    \mathbb{E}[\ell_{\mathrm{RB}}(\theta;Z)]=R(\theta),
    \quad G_{\mathrm{RB}}=\mathbb{E}[G_{\mathrm H}\mid Z],
    \quad V_{\mathrm H}-V_{\mathrm{RB}}
    =\mathbb{E}\!\left[\operatorname{Cov}(G_{\mathrm H}\mid Z)\right]\succeq0.
    \label{eq:rb-guarantee}
\end{equation}
\end{proposition}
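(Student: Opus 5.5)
The plan is to reduce all three claims to the tower property of conditional expectation, using the single structural fact that \(S\) is a measurable function of \(Z\) (Section~\ref{sec:structured-targets} requires \(Z\) to retain the complete model input). Consequently, for fixed \(\theta\), the quantity \(\log q_\theta(k\mid S)\) is \(Z\)-measurable for every class \(k\). The only remaining randomness in \(\ell_{\mathrm H}(\theta;S,Y)\) given \(Z\) is therefore the label \(Y\), whose conditional law is \(p_Z\) by hypothesis.

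\textbf{Step 1: the loss identity.} I will first confirm that the closed form in Equation~\eqref{eq:rb-loss} is a version of \(\mathbb{E}[\ell_{\mathrm H}\mid Z]\). Write
\[
\ell_{\mathrm H}=-\sum_{k=1}^K \mathbf 1\{Y=k\}\log q_\theta(k\mid S).
\]
Each factor \(\log q_\theta(k\mid S)\) can be pulled out of the conditional expectation, and \(\mathbb{E}[\mathbf 1\{Y=k\}\mid Z]=p_Z(k)\). Integrability of the losses (Appendix~\ref{app:rb-assumptions}) justifies this step, including the case where some \(q_\theta(k\mid S)\) is small on a null set. The tower property then gives
\[
\mathbb{E}[\ell_{\mathrm{RB}}]=\mathbb{E}\bigl[\mathbb{E}[\ell_{\mathrm H}\mid Z]\bigr]=\mathbb{E}[\ell_{\mathrm H}]=R(\theta),
\]
using Equation~\eqref{eq:pretraining-objective}. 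All expectations are under the fixed law \(P\), which does not depend on \(\theta\).

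\textbf{Step 2: the gradient identity.} The targets \(p_Z\) do not depend on \(\theta\), so differentiating the closed form termwise gives
\[
G_{\mathrm{RB}}=-\sum_k p_Z(k)\nabla_\theta\log q_\theta(k\mid S).
\]
By the same pull-out argument applied to \(G_{\mathrm H}=-\sum_k\mathbf 1\{Y=k\}\nabla_\theta\log q_\theta(k\mid S)\), this equals \(\mathbb{E}[G_{\mathrm H}\mid Z]\). Equivalently, it follows from the assumed interchange of differentiation and conditional expectation. Because the sum over \(K\) classes is finite, the interchange is elementary here, and the stated assumption covers any measurability subtleties.

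\textbf{Step 3: the covariance identity.} This is the law of total covariance. Set \(M=\mathbb{E}[G_{\mathrm H}\mid Z]=G_{\mathrm{RB}}\). Finite second moments make every term well defined, and
\[
V_{\mathrm H}=\mathbb{E}[\operatorname{Cov}(G_{\mathrm H}\mid Z)]+\operatorname{Cov}(M).
\]
To see this, expand \(G_{\mathrm H}-\mathbb{E}G_{\mathrm H}=(G_{\mathrm H}-M)+(M-\mathbb{E}M)\) and note that the cross terms vanish: \(\mathbb{E}[(G_{\mathrm H}-M)(M-\mathbb{E}M)^\top\mid Z]=0\), since \(M\) is \(Z\)-measurable. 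Since \(\operatorname{Cov}(M)=V_{\mathrm{RB}}\), the difference \(V_{\mathrm H}-V_{\mathrm{RB}}\) is the expectation of conditional covariance matrices. Each of these is positive semidefinite almost surely, so the expectation is \(\succeq0\).

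The main obstacle is not the algebra but the bookkeeping in Step 1. One has to be sure that \(S\) is genuinely \(Z\)-measurable and that \(p_Z\) is the conditional law under the post-filtering distribution \(P\), since the proposition is stated with respect to \(P\) after task acceptance. I will rely on the hypothesis \(p_Z(k)=P(Y=k\mid Z)\) exactly as stated, deferring its verification for terminal draws and for the Gaussian channel to Appendices~\ref{app:selection} and~\ref{app:channel-posterior}. Step 3 is standard once \(M\) has finite second moment, which follows from the conditional Jensen inequality.
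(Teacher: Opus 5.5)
Your proposal is correct and follows the paper's own argument: you pull the $Z$-measurable prediction out of the conditional expectation to get $\ell_{\mathrm{RB}}=\mathbb{E}[\ell_{\mathrm H}\mid Z]$, apply the tower property, identify $G_{\mathrm{RB}}$ with $\mathbb{E}[G_{\mathrm H}\mid Z]$, and conclude by total covariance with a vanishing cross term between $G_{\mathrm H}-G_{\mathrm{RB}}$ and $G_{\mathrm{RB}}$, which is the paper's $\Delta$ argument in Appendix~\ref{app:softmax-noise}. Your termwise pull-out for the gradient identity justifies that step more elementarily than the paper's appeal to interchanging differentiation and conditional expectation, but the route is the same.
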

\begin{proof}
Given \(Z\), the model prediction is fixed, so averaging the hard-label loss gives Equation~\eqref{eq:rb-loss}.
Taking expectations preserves \(R\), and differentiating gives the conditional-gradient identity.
The covariance equality follows from the law of total covariance.
\end{proof}

For selective supervision, let \(A=A(Z)\in\{0,1\}\) select tractable branches before the residual label draw, and write \(\Gamma_\theta(Z)=\operatorname{Cov}(G_{\mathrm H}\mid Z)\).
The single-query form of Equation~\eqref{eq:gist-objective} then satisfies
\begin{equation}
    \ell_{\mathrm M}=A\ell_{\mathrm{RB}}+(1-A)\ell_{\mathrm H},
    \quad \mathbb{E}[\ell_{\mathrm M}]=R(\theta),
    \quad V_{\mathrm H}-V_{\mathrm M}
    =\mathbb{E}[A\Gamma_\theta(Z)]\succeq0.
    \label{eq:selective-rb}
\end{equation}
The mixed objective retains the posterior predictive target \(\pi_S\) and covariance ordering under \(P\) (Appendix~\ref{app:calibrated-objective}).
The table-level extension uses a common retained state and includes cross-query covariance terms (Appendix~\ref{app:multiple-queries}).

For softmax predictions \(q_\theta=\operatorname{softmax}(a_\theta)\), let \(J_\theta=\partial a_\theta/\partial\theta^\top\) be the logit Jacobian.
The removable covariance for a single query is
\begin{equation}
    \Gamma_\theta(Z)
    =J_\theta^\top
    \left[\operatorname{Diag}(p_Z)-p_Zp_Z^\top\right]J_\theta.
    \label{eq:removable-noise}
\end{equation}
The bracketed term is the conditional covariance of the one-hot label, mapped to parameter space by the logit Jacobian.
Deterministic conditional labels give \(\Gamma_\theta(Z)=0\), while task and input variation remain after averaging.
Appendix~\ref{app:rb-proofs} gives the covariance decomposition and gradient estimation error, along with an asymptotic comparison under additional M-estimation assumptions.
Proposition~\ref{prop:rb} concerns raw gradients; Section~\ref{sec:component-analysis} evaluates clipped-gradient variability empirically.

% !TeX root = ../simrb.tex
\section{Experiments}
\label{sec:experiments}
% Generated by scripts/build_experiment_assets.py.
\newcommand{\TalentMainDatasets}{188}
\newcommand{\TalentMainFolds}{940}
\newcommand{\ArenaICLWin}{55.3}
\newcommand{\ArenaICLWins}{21}
\newcommand{\ArenaICLHardImp}{9.04}
\newcommand{\ArenaICLHardRank}{5.62}
\newcommand{\ArenaICLRBImp}{9.23}
\newcommand{\ArenaICLRBRank}{5.37}
\newcommand{\ArenaSwiftWin}{65.8}
\newcommand{\ArenaSwiftWins}{25}
\newcommand{\ArenaSwiftHardImp}{15.75}
\newcommand{\ArenaSwiftHardRank}{8.11}
\newcommand{\ArenaSwiftRBImp}{15.31}
\newcommand{\ArenaSwiftRBRank}{7.70}
\newcommand{\ArenaPFNWin}{71.1}
\newcommand{\ArenaPFNWins}{27}
\newcommand{\ArenaPFNHardImp}{8.61}
\newcommand{\ArenaPFNHardRank}{4.42}
\newcommand{\ArenaPFNRBImp}{4.57}
\newcommand{\ArenaPFNRBRank}{3.74}
\newcommand{\TalentICLWin}{64.4}
\newcommand{\TalentICLWins}{121}
\newcommand{\TalentICLHardImp}{12.66}
\newcommand{\TalentICLHardRank}{4.56}
\newcommand{\TalentICLRBImp}{11.62}
\newcommand{\TalentICLRBRank}{4.09}
\newcommand{\TalentSwiftWin}{67.0}
\newcommand{\TalentSwiftWins}{126}
\newcommand{\TalentSwiftHardImp}{20.99}
\newcommand{\TalentSwiftHardRank}{6.66}
\newcommand{\TalentSwiftRBImp}{19.91}
\newcommand{\TalentSwiftRBRank}{6.24}
\newcommand{\TalentPFNWin}{53.7}
\newcommand{\TalentPFNWins}{101}
\newcommand{\TalentPFNHardImp}{7.52}
\newcommand{\TalentPFNHardRank}{3.32}
\newcommand{\TalentPFNRBImp}{7.77}
\newcommand{\TalentPFNRBRank}{3.23}
\newcommand{\BeyondICLWin}{50.0}
\newcommand{\BeyondICLWins}{6}
\newcommand{\BeyondICLHardImp}{27.45}
\newcommand{\BeyondICLHardRank}{7.21}
\newcommand{\BeyondICLRBImp}{27.33}
\newcommand{\BeyondICLRBRank}{7.26}
\newcommand{\BeyondSwiftWin}{75.0}
\newcommand{\BeyondSwiftWins}{9}
\newcommand{\BeyondSwiftHardImp}{27.12}
\newcommand{\BeyondSwiftHardRank}{7.81}
\newcommand{\BeyondSwiftRBImp}{25.05}
\newcommand{\BeyondSwiftRBRank}{7.11}
\newcommand{\BeyondPFNWin}{58.3}
\newcommand{\BeyondPFNWins}{7}
\newcommand{\BeyondPFNHardImp}{18.31}
\newcommand{\BeyondPFNHardRank}{4.19}
\newcommand{\BeyondPFNRBImp}{17.95}
\newcommand{\BeyondPFNRBRank}{4.68}

% Generated by scripts/build_experiment_assets.py.
\newcommand{\BeyondOwnICLWins}{11}
\newcommand{\BeyondOwnICLDatasets}{21}
\newcommand{\BeyondOwnSwiftWins}{15}
\newcommand{\BeyondOwnSwiftDatasets}{18}
\newcommand{\BeyondOwnPFNWins}{7}
\newcommand{\BeyondOwnPFNDatasets}{12}

We evaluate GIST as a joint design of synthetic task generation and query supervision. Our experiments address three complementary questions:
(i) does the complete design improve downstream prediction across TFM architectures under matched pretraining schedules, in both IID and OOD evaluation?
(ii) under a shared calibrated prior, where and when does conditional supervision reduce clipped-gradient variability?
(iii) how sensitive is downstream performance to the entropy budget controlling the encoding channel?

\subsection{Experimental setup}
\label{sec:experiment-setup}

\stitle{Datasets.}
IID evaluation covers 38 TabArena classification datasets and \TalentMainDatasets{} TALENT datasets shared by all methods~\cite{erickson2025tabarena,liu2025talenttoolbox}.
OOD evaluation uses BeyondArena's temporal and grouped splits~\cite{purucker2026beyondarena}, with each architecture's complete feasible task set: \BeyondOwnICLDatasets{} datasets for TabICLv2, \BeyondOwnSwiftDatasets{} for TabSwift, and \BeyondOwnPFNDatasets{} for TabPFN-2.5.
IID and OOD refer to the real-data train/test split regimes.

\stitle{Baselines.}
We evaluate Hard and \method{} on three representative TFM architectures: TabICLv2 with column-then-row encoding~\cite{qu2026tabiclv2}, TabSwift with efficient row-wise attention~\cite{liu2026tabswift}, and TabPFN-2.5 with cell-based alternating row/column attention~\cite{grinsztajn2025tabpfn25}.
Within TabICLv2's GraphSCM generator family, Hard uses the original generator and hard query labels; \method{} uses our joint generator--supervision design.
We also include tuned CatBoost, LightGBM, XGBoost, RealMLP, and TabM baselines~\cite{prokhorenkova2018catboost,ke2017lightgbm,chen2016xgboost,holzmuller2024realmlp,gorishniy2025tabm}.

\stitle{Metrics.}
We report ROC-AUC for binary tasks and log loss for multiclass tasks, averaging within datasets before weighting datasets equally.
For cross-task aggregation, error is \(e=1-\mathrm{AUC}\) or log loss, respectively.
On each split, we compute Improvability~\cite{erickson2025tabarena} as \(I_m=1-\min_j e_j/e_m\), with \(I_m=0\) when \(e_m=0\); mean rank uses average ranks for ties.
Both are lower-is-better and follow the same dataset weighting.
Paired win rates compare dataset-level mean scores within each architecture, retaining ties in the denominator.

\stitle{Training and evaluation protocols.}
Within each architecture, Hard and \method{} use matched effective global batch sizes, training schedules, and inference settings.
Stage~1 (short-context pretraining) trains for 30,000 updates on synthetic tasks of fixed length 1,024 rows.
Stage~2 (long-context continuation) trains for 6,000 additional updates from the corresponding Stage~1 checkpoint, with task lengths sampled logarithmically between 400 and 10,240 rows.
Task length counts both support and query rows.
Unless stated otherwise, Hard and \method{} use the final checkpoints of this two-stage, 36K-update schedule.
By default, \method{} uses \(\beta=0.15\), the all-row binary-cap budget, and no external noise cap.
TabArena evaluation uses 594 official outer splits with the benchmark's internal validation and refitting procedures; TALENT uses five stratified folds per dataset.
BeyondArena follows the official outer splits without inner cross-validation for TFMs.
Appendix~\ref{app:experiment-details} specifies tuning, inference, and dataset coverage.

\stitle{Environment.}
Our TFM runs use PPU-ZW810E servers with 16 accelerators per node (95.6\,GiB each), running Ubuntu 24.04.2 LTS, PPU SDK 2.1.0, and driver 1.3.2.
Pretraining uses Python 3.12 and a PPU build of PyTorch 2.9.0, with FP32 and single-node distributed data parallelism.

\subsection{Predictive performance}
\label{sec:predictive-performance}

We evaluate TabICLv2, TabSwift, and TabPFN-2.5 trained with \method{} against their Hard counterparts, which use the original GraphSCM generator and sampled hard labels.
We also include five representative baselines: three gradient-boosted tree methods (CatBoost, LightGBM, and XGBoost) and two neural models (RealMLP and TabM).
Evaluation covers the IID splits of TabArena and TALENT and the OOD splits of BeyondArena.
Table~\ref{tab:benchmark-results} summarizes IID performance across all methods, while Table~\ref{tab:paired-win-rates} reports \method{}'s paired win rates against Hard within each architecture across the three benchmarks.

% Generated by scripts/build_experiment_assets.py.
\begin{table}[!t]
\centering
\caption{\textbf{IID performance of the complete \method{} design.}
AUC: binary ROC-AUC; LL: multiclass log loss; $I$: Improvability (\%); Rank: mean rank among 11 methods.
Bold marks the best value per column; \textcolor{gistImproved}{blue}/\textcolor{gistNotImproved}{green} marks improvement/deterioration against same-architecture Hard.}
\label{tab:benchmark-results}
\begingroup
\fontsize{9}{10.5}\selectfont
\setlength{\tabcolsep}{2.2pt}
\renewcommand{\arraystretch}{1.10}
\begin{tabular*}{\linewidth}{@{\extracolsep{\fill}}lrrrrrrrr@{}}
\toprule
\multirow{2}{*}[-2pt]{Method} & \multicolumn{4}{c}{TabArena (38)} & \multicolumn{4}{c}{TALENT (188)} \\
\cmidrule(lr){2-5}\cmidrule(l){6-9}
& AUC$\uparrow$ (30) & LL$\downarrow$ (8) & $I\downarrow$ & Rank$\downarrow$ & AUC$\uparrow$ (120) & LL$\downarrow$ (68) & $I\downarrow$ & Rank$\downarrow$ \\
\midrule
CatBoost & 0.8496 & 0.2611 & 11.47 & 6.03 & 0.8536 & 0.4273 & 26.08 & 7.09 \\
LightGBM & 0.8480 & 0.2614 & 12.52 & 6.20 & 0.8606 & 0.4503 & 26.57 & 7.36 \\
XGBoost & 0.8473 & 0.2610 & 13.08 & 6.89 & 0.8481 & 0.4735 & 28.79 & 7.61 \\
RealMLP & 0.8491 & 0.2543 & 10.86 & 5.98 & 0.8593 & 0.4572 & 29.73 & 8.14 \\
TabM & 0.8488 & 0.2615 & 11.43 & 5.93 & 0.8513 & 0.4422 & 28.58 & 7.70 \\
\midrule
TabICLv2 (Hard) & 0.8495 & 0.2541 & 9.04 & 5.62 & 0.8804 & 0.3977 & 12.66 & 4.56 \\
\quad + \method{} & \textcolor{gistImproved}{0.8501} & \textcolor{gistNotImproved}{0.2549} & \textcolor{gistNotImproved}{9.23} & \textcolor{gistImproved}{5.37} & \textcolor{gistImproved}{0.8817} & \textcolor{gistImproved}{0.3967} & \textcolor{gistImproved}{11.62} & \textcolor{gistImproved}{4.09} \\
\addlinespace[2pt]
TabSwift (Hard) & 0.8420 & 0.2848 & 15.75 & 8.11 & 0.8713 & 0.4267 & 20.99 & 6.66 \\
\quad + \method{} & \textcolor{gistImproved}{0.8425} & \textcolor{gistImproved}{0.2811} & \textcolor{gistImproved}{15.31} & \textcolor{gistImproved}{7.70} & \textcolor{gistImproved}{0.8726} & \textcolor{gistImproved}{0.4243} & \textcolor{gistImproved}{19.91} & \textcolor{gistImproved}{6.24} \\
\addlinespace[2pt]
TabPFN-2.5 (Hard) & 0.8495 & 0.2884 & 8.61 & 4.42 & 0.8863 & \textbf{0.3909} & \textbf{7.52} & 3.32 \\
\quad + \method{} & \textcolor{gistImproved}{\textbf{0.8529}} & \textcolor{gistImproved}{\textbf{0.2536}} & \textcolor{gistImproved}{\textbf{4.57}} & \textcolor{gistImproved}{\textbf{3.74}} & \textcolor{gistImproved}{\textbf{0.8864}} & \textcolor{gistNotImproved}{0.3915} & \textcolor{gistNotImproved}{7.77} & \textcolor{gistImproved}{\textbf{3.23}} \\
\bottomrule
\end{tabular*}
\endgroup
\end{table}

\stitle{IID prediction.}
Across the six combinations of architecture and benchmark, \method{} improves mean rank and mean binary ROC-AUC in all six, and mean multiclass log loss and Improvability in four (Table~\ref{tab:benchmark-results}).
It also wins on a majority of datasets in every combination (Table~\ref{tab:paired-win-rates}).
TabSwift improves all four reported metrics on both IID benchmarks, with paired win rates of \ArenaSwiftWin\% on TabArena and \TalentSwiftWin\% on TALENT.
TabPFN-2.5 achieves the largest decrease in Improvability on TabArena, from \ArenaPFNHardImp\% to \ArenaPFNRBImp\%, with \ArenaPFNWins/38 paired wins.

\stitle{OOD prediction.}
On the architecture-specific BeyondArena task sets, \method{} wins on \BeyondOwnSwiftWins/\BeyondOwnSwiftDatasets{} datasets for TabSwift, \BeyondOwnICLWins/\BeyondOwnICLDatasets{} for TabICLv2, and \BeyondOwnPFNWins/\BeyondOwnPFNDatasets{} for TabPFN-2.5 (Table~\ref{tab:paired-win-rates}).
Each comparison includes every task with all official splits completed for both conditions.
On the 12 datasets shared by all methods, \method{} lowers mean multiclass log loss for all three architectures; mean binary AUC improves for TabICLv2 and TabSwift but decreases for TabPFN-2.5.
Appendix~\ref{app:ood-results} gives the full metrics and comparisons with external baselines.
The results show that \method{} can improve prediction under the evaluated distribution shifts, with gains varying across architectures and metrics.

\subsection{Gradient variance across training and components}
\label{sec:component-analysis}

At seven checkpoints along each \method{} training trajectory, we compare Hard and selective \method{} under the same calibrated GraphSCM prior, sharing weights, inputs, logits, and loss normalization.
We use tasks with tractable conditional query targets at table lengths $L\in\{1024,8192\}$ with 80\% context rows and clip each whole-batch gradient at global norm one.
Figure~\ref{fig:gradient-stages} decomposes the percentage reduction $R$ in gradient covariance trace relative to Hard into component contributions $c_m$ in percentage points.
Within-component relative reductions use each component's own Hard variance as the baseline.
Pointwise 95\% bootstrap intervals quantify batch-sampling variability at fixed weights.
Appendices~\ref{app:gradient-diagnostics} and~\ref{app:module-analysis} give the estimator, decomposition, and parameter partitions.

% !TeX root = ../simrb.tex
\begin{figure}[t]
    \centering
    \includegraphics[width=\linewidth]{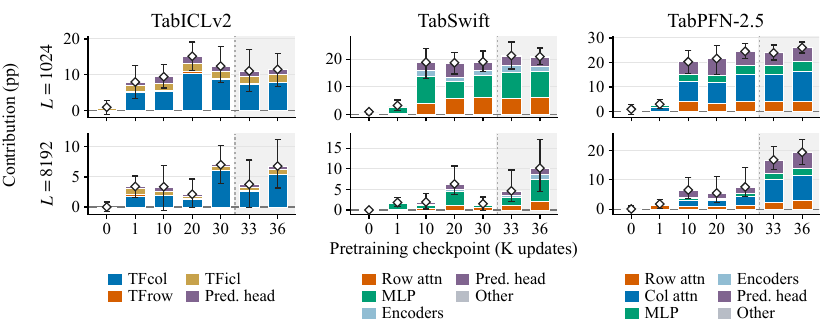}
    \caption{\textbf{Component contributions to clipped-gradient variance reduction.}
    Segments show contributions $c_m$ in percentage points of whole-model Hard variance; hollow diamonds show their sum $R$ with pointwise 95\% batch-pair bootstrap intervals.
    Shading marks Stage~2; checkpoints are equally spaced for display, and panels use separate linear $y$-axis ranges.}
    \label{fig:gradient-stages}
    \vspace{-1em}
\end{figure}

% !TeX root = ../../simrb.tex
\begin{table}[!t]
    \begin{minipage}[t]{0.485\linewidth}
        \vspace{0pt}
        \caption{\textbf{Paired dataset-level win rates (\%).}
        \method{} versus same-architecture Hard, retaining ties in the denominator.
        IID sets match Table~\ref{tab:benchmark-results}; BeyondArena uses architecture-specific complete sets (counts in parentheses).}
        \label{tab:paired-win-rates}
    \end{minipage}\hfill
    \begin{minipage}[t]{0.485\linewidth}
        \vspace{0pt}
        \caption{\textbf{TabSwift-\method{} entropy-budget sensitivity.}
        Each $\beta$ is fixed across both pretraining stages.
        Datasets and metrics follow Table~\ref{tab:benchmark-results}; bold marks the best mean per column.}
        \label{tab:beta-sensitivity}
    \end{minipage}
    \par\noindent
    \begin{minipage}[t]{0.485\linewidth}
        \vspace{0pt}
        \centering
        % Generated by scripts/build_experiment_assets.py.
\begingroup
\fontsize{9}{10.5}\selectfont
\setlength{\tabcolsep}{2pt}
\renewcommand{\arraystretch}{1.15}
\begin{tabular*}{\linewidth}{@{\extracolsep{\fill}}lrrr@{}}
\toprule
Architecture & TabArena & TALENT & BeyondArena \\
\midrule
TabICLv2 & 55.3 & 64.4 & 52.4\,(21) \\
TabSwift & 65.8 & 67.0 & 83.3\,(18) \\
TabPFN-2.5 & 71.1 & 53.7 & 58.3\,(12) \\
\bottomrule
\end{tabular*}
\endgroup

    \end{minipage}\hfill
    \begin{minipage}[t]{0.485\linewidth}
        \vspace{0pt}
        \centering
        % Generated by scripts/build_experiment_assets.py.
\begingroup
\fontsize{8.5}{9}\selectfont
\setlength{\tabcolsep}{2pt}
\renewcommand{\arraystretch}{0.92}
\setlength{\aboverulesep}{0.9pt}
\setlength{\belowrulesep}{1.25pt}
\begin{tabular*}{\linewidth}{@{\extracolsep{\fill}}lrrrr@{}}
\toprule
\multirow{2}{*}[-1pt]{$\beta$} & \multicolumn{2}{c}{TabArena} & \multicolumn{2}{c}{TALENT (188)} \\
\cmidrule(lr){2-3}\cmidrule(l){4-5}
& AUC$\uparrow$ & LL$\downarrow$ & AUC$\uparrow$ & LL$\downarrow$ \\
\midrule
Hard & 0.8420 & 0.2848 & 0.8713 & 0.4267 \\
0.05 & \textbf{0.8427} & 0.2824 & \textbf{0.8731} & 0.4250 \\
0.15 & 0.8425 & \textbf{0.2811} & 0.8726 & 0.4243 \\
0.30 & 0.8423 & 0.2822 & 0.8727 & \textbf{0.4240} \\
\bottomrule
\end{tabular*}
\endgroup

    \end{minipage}
    \vspace{-1.5em}
\end{table}

\stitle{Reduction beyond early training.}
The net reductions in Figure~\ref{fig:gradient-stages} are below 1\% at initialization for all three architectures and both table lengths.
At $L=1024$, TabSwift and TabPFN-2.5 already show most of their final reduction by 10K updates; at $L=8192$, their largest estimates occur at 36K.
At 36K updates, the reductions range from 6.80\% to 26.04\%, with all six pointwise intervals above zero.
The final reduction is smaller at $L=8192$ than at $L=1024$ for every architecture.
The larger effect at trained checkpoints is consistent with the parameter dependence in Equation~\eqref{eq:removable-noise}: before clipping, the logit Jacobian maps conditional label covariance into parameter space.
Even with fixed conditional targets, changes in this Jacobian can alter the covariance removed by averaging.

\stitle{Beyond the prediction head.}
At 36K updates, the prediction head has the largest estimated within-component relative reduction in all six settings, ranging from 37.70\% to 47.23\% (Appendix Tables~\ref{tab:variance-components-1024} and~\ref{tab:variance-components-8192}).
The head accounts for only 1.2\%--13.9\% of Hard gradient variance in these settings, so its large proportional response has limited weight in the whole-model total.
Components outside the head account for 72.0\%--91.5\% of the net whole-model reduction.
Although \method{} changes supervision at the query loss, its measured variance effect extends to the encoders and backbone components.

\stitle{Contributions across architectures.}
Column-processing components are major contributors across training in TabICLv2 and TabPFN-2.5.
TFcol has the largest estimated contribution at every measured noninitial checkpoint for both table lengths, accounting for more than half of the net reduction in each setting.
Within-column attention in TabPFN-2.5 ranks among the top two contributors at these checkpoints, leading at every checkpoint for $L=1024$ and from 20K onward for $L=8192$.
Both components aggregate information across samples within feature columns or groups.
In TabSwift, MLPs contribute the most in 11 of the 12 settings after initialization, including both final checkpoints.
The leading contributors are thus largely stable across trained checkpoints, but their identity differs across architectures.

% The full-mixture reductions are smaller, approximately 0\%--7.33\%; Appendix~\ref{app:gradient-diagnostics} reports these results and query-count controls.

\FloatBarrier
\subsection{Entropy-budget sensitivity}
\label{sec:beta-sensitivity}

The entropy budget controls the residual label uncertainty permitted in the encoding channel, making it a design parameter of the generator--supervision pair.
We pretrain TabSwift with \(\beta\in\{0.05,0.15,0.30\}\) under the same training schedule and evaluation protocol.
All three tested budgets improve mean binary AUC and multiclass log loss over Hard on both IID benchmarks (Table~\ref{tab:beta-sensitivity}).
The preferred value varies by benchmark and metric, but gains extend beyond the default budget of \(\beta=0.15\) to all three tested values.
Appendix~\ref{app:beta-sensitivity} gives paired win rates for each budget and the full protocol.

% !TeX root = ../simrb.tex
\section{Related Work}
\label{sec:related-work}

\paragraph{Tabular Prediction Tasks.}
Tabular prediction comprises classification and regression from structured records containing numerical and categorical features.
A table can mix continuous measurements, categorical identifiers, and text fields, and column meanings can differ across datasets.
This heterogeneity affects how models represent features and share information across tasks.
Most approaches fit a separate predictor to each dataset.
Gradient-boosted trees, including XGBoost, LightGBM, and CatBoost, are widely used in this setting~\cite{chen2016xgboost,ke2017lightgbm,prokhorenkova2018catboost}.
Neural methods learn feature representations and interactions, with transformer and retrieval-based models providing alternatives~\cite{gorishniy2021ftt,gorishniy2024tabr}.
AutoML systems combine model selection and ensembling to adapt these learners to individual datasets~\cite{erickson2020autogluon}.
OpenML-CC18, TabZilla, TALENT, and TabArena support comparisons across datasets with different sample sizes, feature types, and target distributions~\cite{bischl2021openml,mcelfresh2023tabzilla,ye2024closerlook,liu2025talenttoolbox,erickson2025tabarena}.

\paragraph{Tabular Foundation Models.}
TFMs learn across tables during pretraining and use that experience to predict on new datasets.
PFNs implement this idea through synthetic tasks sampled from a prior, whose posterior predictive distribution defines the population target~\cite{muller2022transformers,nagler2023statistical}.
The generator defines the task distribution, while the supervision rule determines how sampled tasks contribute to pretraining.
TabPFN, TabICL, and Mitra develop this approach through synthetic-prior construction and model design~\cite{hollmann2023tabpfn,hollmann2025tabpfn,qu2025tabicl,zhang2025mitra}.
TabICLv2, TabFlex, and TabSwift further address context scaling and architectural efficiency~\cite{qu2026tabiclv2,zeng2025tabflex,liu2026tabswift}.
TabDPT pretrains on real tables~\cite{ma2025tabdpt}, while Real-TabPFN and Real-TabPFN-2.5 continue pretraining synthetic-data models on real datasets~\cite{garg2025realtabpfn,grinsztajn2025tabpfn25}.
Semantic models exploit column names and cell text as transferable information~\cite{kim2024carte,spinaci2025contexttab,arazi2025tabstar}.

\paragraph{Generator-Informed Learning.}
PIQL augments TFMs with dataset statistics and generator-program representations, using a student module to reconstruct the latter from visible context at inference~\cite{ding2026privileged}.
\method{} uses generator information to construct query targets while retaining the predictor and inference procedure.
ALICE replaces sampled labels with conditional class probabilities derived from joint likelihood ratios for simulator-based likelihood-ratio estimation~\cite{stoye2018likelihood}.
\method{} shares this conditional-averaging principle and jointly designs task generation and supervision to obtain tractable posterior targets through an entropy-calibrated local channel in supported label-dependent branches.

% !TeX root = ../simrb.tex
\section{Conclusion}
\label{sec:conclusion}

Synthetic generators can support pretraining through the conditional label information they retain.
\method{} makes this information usable by jointly designing task generation and query supervision, enabling conditional loss averaging that preserves the population objective within each fixed generating distribution.
Improvements across three TFM architectures on TabArena and TALENT support the practical value of this design without changes to model architecture or inference.
These results motivate considering the supervision a synthetic generator can provide alongside the tasks it produces, with transfer under distribution shift remaining an open direction.

\subsection*{AI use statement}

We used generative AI tools to assist with drafting and revising selected parts of the manuscript, including figure and table captions, experimental discussion, and language polishing. Generative AI was also used to provide feedback on the organization of technical material, experimental analysis, and interpretation of reported results. The authors reviewed and revised all AI-assisted content and take full responsibility for the final manuscript.
\bibliographystyle{iclr2027_conference}
\bibliography{predtfm_bib}

\clearpage
\appendix
% !TeX root = ../simrb.tex
% !TeX root = ../simrb.tex
\section{Conditional Supervision: Proofs and Extensions}
\label{app:rb-proofs}

\subsection{Regularity conditions and objective preservation}
\label{app:rb-assumptions}

Use the law \(P\) from Section~\ref{sec:conditional-supervision}, with \(S\) measurable with respect to \(Z\).
Assume \(q_\theta(k\mid S)>0\) for classes with positive conditional probability.
The losses are integrable, their gradients have finite second moments, and differentiation can be interchanged with the expectations used below.
One sufficient local condition for the last requirement is differentiability together with an integrable bound on the gradient in a neighborhood of \(\theta\).
Model randomness is held fixed; alternatively, independent model randomness can be included in both \(S\) and \(Z\).

Since \(q_\theta(\cdot\mid S)\) is fixed conditional on \(Z\),
\begin{equation}
    \begin{aligned}
        \mathbb{E}[\ell_{\mathrm{H}}\mid Z]
        &=-\sum_{k=1}^{K}P(Y=k\mid Z)\log q_\theta(k\mid S)
        =\ell_{\mathrm{RB}},\\
        \mathbb{E}[\ell_{\mathrm{RB}}]
        &=\mathbb{E}\!\left[\mathbb{E}[\ell_{\mathrm{H}}\mid Z]\right]
        =R(\theta).
    \end{aligned}
    \label{eq:app-risk-preservation}
\end{equation}
To connect this equality to the model's prediction target, define
\(\ell_{\mathrm{P}}(\theta;S)=-\sum_k\pi_S(k)\log q_\theta(k\mid S)\).
The tower property gives
\begin{equation}
    \mathbb{E}[\ell_{\mathrm{RB}}\mid S]=\ell_{\mathrm{P}},
    \qquad
    \mathbb{E}[\ell_{\mathrm{P}}]=R(\theta).
    \label{eq:app-ppd-objective}
\end{equation}
Thus conditioning the supervision on generator information preserves the objective associated with the posterior predictive distribution.
The generator supplies \(p_Z\) during training, while the predictor continues to estimate \(\pi_S\) from \(S\).

\subsection{Gradient covariance and the softmax expression}
\label{app:softmax-noise}

Differentiating the conditional-loss identity yields \(G_{\mathrm{RB}}=\mathbb{E}[G_{\mathrm{H}}\mid Z]\).
Let \(\Delta=G_{\mathrm{H}}-G_{\mathrm{RB}}\).
Then \(\mathbb{E}[\Delta\mid Z]=0\), so its cross-covariance with any square-integrable function of \(Z\), including \(G_{\mathrm{RB}}\), vanishes.
Consequently,
\begin{equation}
    V_{\mathrm{H}}-V_{\mathrm{RB}}
    =\mathbb{E}[\Delta\Delta^\top]
    =\mathbb{E}[\Gamma_\theta(Z)]\succeq0.
    \label{eq:app-covariance-proof}
\end{equation}
For any fixed vector \(v\), the quadratic form of this difference is
\(\mathbb{E}[(v^\top\Delta)^2]\ge0\).
This proves the covariance statement in Proposition~\ref{prop:rb}.
Since both gradients have mean \(\nabla R(\theta)\), taking traces also gives an exact mean-squared error reduction:
\begin{equation}
    \begin{aligned}
        &\mathbb E\|G_{\mathrm H}-\nabla R(\theta)\|_2^2
        -\mathbb E\|G_{\mathrm{RB}}-\nabla R(\theta)\|_2^2\\
        &\qquad=\mathbb E\|G_{\mathrm H}-G_{\mathrm{RB}}\|_2^2
        =\mathbb E\!\left[\operatorname{tr}\Gamma_\theta(Z)\right]\ge0.
    \end{aligned}
    \label{eq:app-gradient-mse}
\end{equation}
The conditional average therefore uses the same prediction to estimate the population gradient with no larger mean-squared error.

For softmax logits \(a_\theta\), let \(J_\theta\in\mathbb R^{K\times d_\theta}\) be their Jacobian and let \(e_Y\) denote the one-hot label.
At a fixed \(S\),
\begin{equation}
    G_{\mathrm{H}}=J_\theta^\top(q_\theta-e_Y),
    \qquad
    G_{\mathrm{RB}}=J_\theta^\top(q_\theta-p_Z).
    \label{eq:app-softmax-gradients}
\end{equation}
Because \(\operatorname{Cov}(e_Y\mid Z)=\operatorname{Diag}(p_Z)-p_Zp_Z^\top\), Equation~\eqref{eq:removable-noise} follows.
The trace of this label covariance is \(1-\|p_Z\|_2^2\).
Its effect in parameter space also depends on \(J_\theta\), and can vanish in directions to which the logits are insensitive.

\subsection{Remaining sources of gradient variation}
\label{app:remaining-noise}

For the special case \(Z=(T,S)\), define \(G_{\mathrm{P}}=\nabla_\theta\ell_{\mathrm{P}}=\mathbb{E}[G_{\mathrm{RB}}\mid S]\).
Applying total covariance first conditional on \(Z\) and then on \(S\) gives
\begin{equation}
    \begin{aligned}
        V_{\mathrm{H}}
        &=V_{\mathrm{episode}}+V_{\mathrm{task}}+V_{\mathrm{label}},\\
        V_{\mathrm{episode}}&:=\operatorname{Cov}(G_{\mathrm{P}}),\\
        V_{\mathrm{task}}&:=\mathbb{E}[\operatorname{Cov}(G_{\mathrm{RB}}\mid S)],\\
        V_{\mathrm{label}}&:=\mathbb{E}[\Gamma_\theta(Z)].
    \end{aligned}
    \label{eq:app-noise-decomposition}
\end{equation}
Conditional supervision removes \(V_{\mathrm{label}}\).
The term \(V_{\mathrm{task}}\) measures variation across mechanisms compatible with the same visible input, while \(V_{\mathrm{episode}}\) remains when the visible input changes.
In particular,
\(V_{\mathrm{H}}\succeq V_{\mathrm{RB}}\succeq\operatorname{Cov}(G_{\mathrm{P}})\).
For a more general \(Z\), the same algebra holds, but the middle term describes variation in all retained generator information conditional on \(S\).

\subsection{Selective conditional supervision}
\label{app:selective-rb}

Let \(A=A(Z)\in\{0,1\}\) be independent of \(\theta\).
With \(G_{\mathrm{M}}=AG_{\mathrm{RB}}+(1-A)G_{\mathrm{H}}\), conditioning on \(Z\) gives
\begin{equation}
    \begin{aligned}
        \mathbb{E}[\ell_{\mathrm{M}}\mid Z]&=\ell_{\mathrm{RB}},\\
        \mathbb{E}[G_{\mathrm{M}}\mid Z]&=G_{\mathrm{RB}},\\
        \operatorname{Cov}(G_{\mathrm{M}}\mid Z)&=(1-A)\Gamma_\theta(Z).
    \end{aligned}
    \label{eq:app-selective-conditional}
\end{equation}
The first identity preserves the population objective.
The other two and total covariance imply
\begin{equation}
    V_{\mathrm{M}}=V_{\mathrm{RB}}+\mathbb{E}[(1-A)\Gamma_\theta(Z)],
    \qquad
    V_{\mathrm{H}}-V_{\mathrm{M}}=\mathbb{E}[A\Gamma_\theta(Z)].
    \label{eq:app-selective-covariance}
\end{equation}
The conditional expectation is defined on every branch, while the implementation evaluates it only when \(A=1\).
The branch indicator is determined by retained state before the residual label draw, as required for the conditional identities above.

\subsection{Multiple queries and independent episodes}
\label{app:multiple-queries}

For a table with \(Q\) queries, let \(S\) include the full model input and write \(q_{\theta,r}\) for the prediction at query \(r\).
Retain a common information set \(Z\) containing \(S\), and define \(p_{Z,r}(k)=P(Y_r=k\mid Z)\).
Linearity of conditional expectation gives
\begin{equation}
    \begin{aligned}
        \ell_{\mathrm{H}}^{\mathrm{table}}
        &:=-\frac1Q\sum_{r=1}^{Q}\log q_{\theta,r}(Y_r\mid S),\\
        \mathbb{E}[\ell_{\mathrm{H}}^{\mathrm{table}}\mid Z]
        &=-\frac1Q\sum_{r=1}^{Q}\sum_{k=1}^{K}
            p_{Z,r}(k)\log q_{\theta,r}(k\mid S).
    \end{aligned}
    \label{eq:app-table-loss}
\end{equation}
Hence Proposition~\ref{prop:rb} applies to the whole table without assuming conditional independence between its labels.
With row Jacobians \(J_{\theta,r}\), the corresponding conditional covariance is
\begin{equation}
    \Gamma_\theta^{\mathrm{table}}(Z)
    =\frac{1}{Q^2}\sum_{r=1}^{Q}\sum_{s=1}^{Q}
        J_{\theta,r}^{\top}
        \operatorname{Cov}(e_{Y_r},e_{Y_s}\mid Z)
        J_{\theta,s}.
    \label{eq:app-table-covariance}
\end{equation}
The off-diagonal terms vanish under conditional independence.
Averaging \(m\) independent table gradients divides their covariance by \(m\).
Within a table, Equation~\eqref{eq:app-table-covariance} accounts for both diagonal and cross-query contributions.

\subsection{Asymptotic parameter covariance}
\label{app:asymptotic}

We relate the fixed-parameter comparison to the finite-dimensional M-estimation setting of Theorem B.1 in the PFN analysis~\cite{nagler2023statistical}.
Assume a fixed parameter dimension, a unique interior minimizer \(\theta^\star\), and a positive-definite population Hessian \(H_\star=\nabla^2R(\theta^\star)\).
For \(a\in\{\mathrm{H},\mathrm{RB}\}\), let \(\widehat\theta_{a,m}\) be an empirical minimizer based on \(m\) independent episodes.
Assume these minimizers are consistent, are interior with probability tending to one, and have empirical Hessians converging locally uniformly to the continuous population Hessian.
Together with the moment and differentiation conditions above, a Taylor expansion of each empirical score gives
\begin{equation}
    \sqrt m(\widehat\theta_{a,m}-\theta^\star)
    =-H_\star^{-1}\frac1{\sqrt m}\sum_{j=1}^{m}
        G_a^{(j)}(\theta^\star)+o_P(1).
    \label{eq:app-estimator-expansion}
\end{equation}
The central limit theorem yields
\begin{equation}
    \sqrt m(\widehat\theta_{a,m}-\theta^\star)
    \ \Rightarrow\ \mathcal N(0,\Sigma_a),
    \qquad
    \Sigma_a=H_\star^{-1}V_a(\theta^\star)H_\star^{-\top}.
    \label{eq:app-asymptotic-normality}
\end{equation}
The two objectives have the same population Hessian.
Applying Equation~\eqref{eq:app-covariance-proof} at \(\theta^\star\) therefore gives
\begin{equation}
    \Sigma_{\mathrm{H}}-\Sigma_{\mathrm{RB}}
    =H_\star^{-1}\mathbb{E}[\Gamma_{\theta^\star}(Z)]H_\star^{-\top}
    \succeq0.
    \label{eq:app-asymptotic-ordering}
\end{equation}
Under these M-estimation assumptions, the empirical minimizers share the \(m^{-1/2}\) rate, and conditional supervision reduces the covariance of the limiting parameter distribution.

% !TeX root = ../simrb.tex
\section{Conditional Targets in the Generator}
\label{app:generator-details}

\subsection{Observed features and task acceptance}
\label{app:selection}

Let \(\mathcal H_0\) denote retained state before a query-label draw, with
\(p^0_k=P(Y=k\mid\mathcal H_0=h)\), and let \(E\in\{0,1\}\) indicate task acceptance.
Writing \(L_k(h,s)\) for the conditional likelihood of \(S=s,E=1\) given \(\mathcal H_0=h,Y=k\), Bayes' rule gives
\begin{equation}
    P(Y=k\mid\mathcal H_0=h,S=s,E=1)
    =\frac{p^0_k L_k(h,s)}{\sum_j p^0_j L_j(h,s)}.
    \label{eq:app-acceptance-posterior}
\end{equation}
Thus the probabilities before sampling remain valid if the observed input and acceptance have a likelihood independent of the label on its support.
This condition holds for a terminal draw when input generation and acceptance no longer read that draw.
In the terminal construction, context-based class boundaries and label checks are part of the generator shared by hard-label and RB training.

When features depend on the sampled label, the likelihood term can change the target.
For example, if \(Y\) is uniform on \(\{0,1\}\) and \(X=Y\), observing \(X=1\) makes the conditional label deterministic.
The likelihood then updates the pre-draw probabilities \((1/2,1/2)\) to a point mass at the observed class.
This example separates the probabilities used to generate a label from the probabilities conditional on the final model input.
The retained state includes that input and leaves the label randomness to be averaged unobserved; storing a random seed that determines the query label would instead fix its value.

\subsection{Posterior under the Gaussian channel}
\label{app:channel-posterior}

We first use the channel law before filtering and then condition on acceptance.
Fix \(\mathcal H=h\), including the codebook, all row probabilities, and a scale \(\sigma(h)>0\) determined solely by this pre-draw state.
Thus the scale is measurable with respect to \(\mathcal H\), irrespective of when the deterministic calibration routine is executed.
Let \(\phi_\sigma(w-b)\) be the density of \(\mathcal N(b,\sigma^2I_d)\).
Conditional independence of row labels and independent encoding noise imply the joint density
\begin{equation}
    f_h(y_{1:L},w_{1:L})
    =\prod_{r=1}^{L}p_{r,y_r}\phi_\sigma(w_r-b_{y_r}),
    \label{eq:app-channel-joint}
\end{equation}
where \(f_h\) is the joint density with respect to counting measure for labels and Lebesgue measure for encodings.
Conditioning on all encodings and the hard context labels gives
\begin{equation}
    P(Y_{N+1:L}=y_{N+1:L}\mid\mathcal H,W_{1:L},Y_{1:N})
    =\prod_{r=N+1}^{L}
        \frac{p_{r,y_r}\phi_\sigma(W_r-b_{y_r})}
        {\sum_j p_{rj}\phi_\sigma(W_r-b_j)}.
    \label{eq:app-channel-factorization}
\end{equation}

The required downstream condition is
\begin{equation}
    Y_{N+1:L}\ \perp\ (S,E)
    \ \mid\ (\mathcal H,W_{1:L},Y_{1:N}).
    \label{eq:app-channel-independence}
\end{equation}
This condition holds when downstream inputs and acceptance are computed from the conditioning variables and auxiliary randomness that is conditionally independent of the query labels.
To see the computational consequence, fix \((h,w_{1:L},y_{1:N})\) and denote the likelihood of \(S=s,E=1\) by \(\Lambda(s,1\mid h,w_{1:L},y_{1:N})\).
By Equation~\eqref{eq:app-channel-independence}, this likelihood is the same for every candidate query-label vector.
Multiplying Equation~\eqref{eq:app-channel-factorization} by \(\Lambda\) and normalizing therefore cancels the entire downstream likelihood.
With \(Z_\sigma=(\mathcal H,W_{1:L},Y_{1:N},S)\), Equation~\eqref{eq:app-channel-independence} implies, on accepted tasks,
\begin{equation}
    P(Y_r=k\mid Z_\sigma,E=1)
    =P(Y_r=k\mid\mathcal H,W_r)
    =\widetilde p_{\sigma,r}(k).
    \label{eq:app-accepted-channel-posterior}
\end{equation}
The Gaussian normalizing factors cancel because all classes share \(\sigma\), giving Equation~\eqref{eq:channel-posterior}.
Conditioning on \(S\) and acceptance also preserves the product posterior in Equation~\eqref{eq:app-channel-factorization}.

The target uses the likelihood of the actual channel.
For a scalar encoding broadcast into several feature positions, this likelihood is evaluated on the single noisy scalar before broadcasting; repeated copies are not independent observations.
Posterior weights can be computed in log space from
\(\log p_{rk}-\|W_r-b_k\|_2^2/(2\sigma^2)\) and normalized using log-sum-exp.
Classes with zero prior probability retain zero posterior probability.

\subsection{Objective preservation after calibration and fallback}
\label{app:calibrated-objective}

Fix a calibration rule and a decision to enable the channel, both based on state available before the query-label and encoding-noise draws.
These choices, the original branches retained as fallback, and the acceptance procedure define \(P_{\mathrm{cal}}\), an instance of the law \(P\) in Section~\ref{sec:conditional-supervision}.
All conditional distributions in this paragraph are taken under that law.

On enabled branches satisfying Equation~\eqref{eq:app-channel-independence}, Equation~\eqref{eq:app-accepted-channel-posterior} supplies the exact query marginals.
Linearity of conditional expectation then gives
\begin{equation}
    \mathbb{E}_{\mathrm{cal}}\!\left[
        -\frac1Q\sum_{r=N+1}^{L}\log q_{\theta,r}(Y_r\mid S)
        \,\middle|\,Z_\sigma
    \right]
    =-\frac1Q\sum_{r=N+1}^{L}\sum_k
        \widetilde p_{\sigma,r}(k)\log q_{\theta,r}(k\mid S).
    \label{eq:app-calibrated-loss}
\end{equation}
The generator state used to combine branches includes the branch decision.
Applying Proposition~\ref{prop:rb} on exact conditional-target branches and Equation~\eqref{eq:selective-rb} for hard-label fallback gives
\begin{equation}
    R_{\mathrm{cal,H}}(\theta)=R_{\mathrm{cal,RB}}(\theta),
    \qquad
    V_{\mathrm{cal,H}}-V_{\mathrm{cal,RB}}\succeq0,
    \label{eq:calibrated-rb}
\end{equation}
Here the RB procedure includes hard-label fallback on untreated branches, and both estimators target the population objective of \(P_{\mathrm{cal}}\).

\subsection{Instantiation in GraphSCM converters}
\label{app:converter-mapping}

We apply the preceding rules to the categorical converters of the TabICLv2 generator~\cite{qu2026tabiclv2}.
In the softmax converters, a continuous node state determines class probabilities, from which a discrete class is sampled.
The class and the state passed to downstream nodes are separate outputs of the converter.
The \texttt{softmax\_id} mode passes on the original continuous state, so the sampled class does not enter subsequent feature computations.
With query-label-independent acceptance, its pre-draw probabilities provide terminal conditional targets even when the node has observed descendants.

The \texttt{softmax\_disc} mode passes on the sampled class's vector code, and \texttt{softmax\_int} passes on its scalar integer code.
On these modes, Equation~\eqref{eq:gaussian-channel} supplies the state used by the downstream graph.
When the sampled class can reach an observed feature, supervision uses the posterior in Equation~\eqref{eq:channel-posterior}.
When the final input and acceptance are independent of the query-category draw conditional on the pre-draw state and context labels, the encoding can be omitted from the retained information and the pre-draw probabilities remain available.
An explicitly infeasible entropy budget selects the sampled hard target as described in Appendix~\ref{app:entropy-calibration}.
Nearest-neighbor converters assign classes deterministically given the node state and centers, and retain one-hot targets.

The same Gaussian scale is used for encoding and posterior evaluation.
A scalar code receives one noise draw per row before broadcasting, and class permutations are applied to both labels and probability columns.
The context supplied to the model keeps its sampled hard labels throughout.

% !TeX root = ../simrb.tex
\section{Entropy Bound and Scale Calibration}
\label{app:entropy-details}

\subsection{From mixture entropy to conditional label entropy}
\label{app:entropy-bound}

Under the channel distribution before task filtering, fix the state \(\mathcal H=h\), a row \(r\), and a positive scale \(\sigma\).
Use \(h_{\mathrm{diff}}\) for differential entropy and write
\(h_\sigma=\tfrac d2\log(2\pi e\sigma^2)\) for the entropy of each Gaussian component.
The discrete-continuous entropy identity gives
\begin{equation}
    \mathcal E_r(\sigma)
    =H(p_r)+h_\sigma-h_{\mathrm{diff}}(W_r\mid\mathcal H=h).
    \label{eq:app-entropy-identity}
\end{equation}
For the mixture \(\sum_i p_{ri}\mathcal N(b_i,\sigma^2I_d)\), Equation (12) in the corrected arXiv version of the mixture-entropy analysis~\cite{kolchinsky2017mixture} gives the Bhattacharyya lower bound
\begin{equation}
    h_{\mathrm{diff}}(W_r\mid\mathcal H=h)
    \ge h_\sigma-\sum_i p_{ri}\log\!\left[
        \sum_j p_{rj}\exp\!\left(-\frac{\|b_i-b_j\|_2^2}{8\sigma^2}\right)
    \right].
    \label{eq:app-mixture-entropy-bound}
\end{equation}
Section IV of that version gives the Gaussian pairwise distances.
Substitution into Equation~\eqref{eq:app-entropy-identity} proves \(\mathcal E_r(\sigma)\le U_r(\sigma)\).

For every \(i\) with \(p_{ri}>0\), the inner sum lies in \([p_{ri},1]\).
Its logarithm therefore lies in \([\log p_{ri},0]\), which proves
\(0\le U_r(\sigma)\le H(p_r)\).
Together with the nonnegativity of conditional label entropy, this yields Equation~\eqref{eq:entropy-upper-bound}.
Zero-probability terms are interpreted by continuity.

\subsection{Monotonicity, duplicate codes, and feasibility}
\label{app:entropy-calibration}

Each factor \(\exp[-\|b_i-b_j\|_2^2/(8s^2)]\) is continuous and nondecreasing for \(s>0\).
Positive weighted sums and logarithms preserve this ordering, so \(U_r(s)\) and \(\overline U(s)\) are continuous and nondecreasing on that domain.
Their limits at zero exist.
For a code shared by several classes, let
\(\rho_{ri}=\sum_{j:b_j=b_i}p_{rj}\).
Then
\begin{equation}
    U_r(0^+)
    =H(p_r)+\sum_{i:p_{ri}>0}p_{ri}\log\rho_{ri}
    =H(Y_r\mid b_{Y_r},\mathcal H=h).
    \label{eq:app-zero-noise-bound}
\end{equation}
Distinct codes give \(U_r(0^+)=0\).
Duplicate codes can leave positive uncertainty even without encoding noise.
Let \(\mathcal B_\beta(h)\) denote either budget policy in Appendix~\ref{app:budget-policies}.
If \(\overline U(0^+)>\mathcal B_\beta(h)\), the budget is infeasible; equality can also leave zero as the only feasible scale.
An infeasible budget retains the original encoding and uses the sampled one-hot target, keeping the task subject to the same acceptance rules.
The positive-channel entropy guarantee applies to branches on which calibration returns a feasible \(\sigma>0\).

For \(\overline H>0\), calibration first checks the zero-scale limit and any finite cap.
A feasible finite cap is the maximum in Equation~\eqref{eq:entropy-scale}.
Otherwise, bisection maintains a feasible lower endpoint and an infeasible upper endpoint.
For an uncapped search, we use \(\beta<1\), so \(\mathcal B_\beta(h)<\overline H\) under either policy.
Since \(\overline U(s)\to\overline H\) as \(s\to\infty\), repeatedly doubling a positive endpoint initialized from a nonzero code distance eventually brackets the budget threshold.
The implementation uses 24 bisection steps and returns the feasible lower endpoint, preserving the constraint in exact arithmetic.

Effectively deterministic tables and searches that resolve no positive scale retain the original encoding.
Their supervision follows the branch rule: use terminal probabilities when the final input and acceptance are independent of the remaining query draw, and sampled hard labels otherwise.
This is separate from the explicit one-hot rule for an infeasible budget.
With \(\beta=1\), the implementation requires a finite cap to define a finite scale choice for every task.

The distance matrix and entropy bound are evaluated in double precision, with zero-probability terms handled by continuity.
The distance matrix is computed once, at cost \(O(K^2d)\), and each bound evaluation costs \(O(LK^2)\); rows can be processed in chunks.
Calibration uses \(\frac1L\sum_r U_r(s)\), which generally differs from evaluating the bound at the mean row probabilities.
The selected scale is used consistently in both the Gaussian draw and its posterior likelihood.

\subsection{Average information guarantee and its scope}
\label{app:information-budget}

For any feasible positive scale selected at \(\mathcal H=h\), averaging Equation~\eqref{eq:entropy-upper-bound} and applying the feasibility constraint in Equation~\eqref{eq:entropy-scale} gives
\begin{equation}
    \frac1L\sum_{r=1}^{L}\mathcal E_r(\sigma_\star)
    \le\overline U(\sigma_\star)
    \le\mathcal B_\beta(h)\le\beta\overline H.
    \label{eq:app-average-entropy-budget}
\end{equation}
Using \(I(Y_r;W_r\mid\mathcal H=h)=H(p_r)-\mathcal E_r(\sigma_\star)\) also bounds the retained mutual information:
\begin{equation}
    \begin{aligned}
        \frac1L\sum_{r=1}^{L}H(Y_r\mid W_r,\mathcal H=h)
        &\le\mathcal B_\beta(h),\\
        \frac1L\sum_{r=1}^{L}I(Y_r;W_r\mid\mathcal H=h)
        &\ge\overline H-\mathcal B_\beta(h)
        \ge(1-\beta)\overline H.
    \end{aligned}
    \label{eq:local-information-budget}
\end{equation}
Here the channel uses \(\sigma_\star(h)\).

The averaging is over all \(L\) rows and over the label and encoding draws conditional on the state before task filtering.
Thus Equation~\eqref{eq:local-information-budget} specifies an all-row, pre-selection guarantee for the local encoding channel.
Within this constraint, calibration selects the largest certified scale; the resulting conditional supervision satisfies Equation~\eqref{eq:calibrated-rb}.

\subsection{Relative and binary-cap budgets}
\label{app:budget-policies}

The two implemented budget policies differ only in the allowed residual entropy:
\begin{equation}
    \mathcal B_\beta(h)=
    \begin{cases}
        \beta\overline H, & \text{relative},\\
        \beta\min\{\overline H,\log 2\}, & \text{binary cap}.
    \end{cases}
    \label{eq:app-budget-policies}
\end{equation}
The cap is applied after averaging the row entropies.
For binary tasks, \(\overline H\le\log 2\), so both policies give the same budget and scale choice.
For multiclass tasks with \(\overline H>\log 2\), the binary cap limits the absolute residual-entropy allowance to \(\beta\log 2\).
Under either policy, the same upper bound, posterior, and root-finding procedure apply.
Since \(\mathcal B_\beta(h)\le\beta\overline H\), both satisfy the information guarantee in Equation~\eqref{eq:local-information-budget}.

The implementation's \texttt{target\_encoding\_noise\_max\_std} value of zero denotes the uncapped choice \(\sigma_{\max}=\infty\).
The solver still determines a task-dependent positive scale whenever the budget and numerical resolution permit it.
The all-row policy described here computes one budget from all row probabilities and applies the resulting scale to both context and query encodings.

% !TeX root = ../simrb.tex
\section{Experimental Details}
\label{app:experiment-details}

\subsection{Pretraining and checkpoints}

All six TFM conditions are pretrained for this study; Hard uses the original GraphSCM generator and hard query labels.
The implementations have 27.55M parameters for TabICLv2, 8.20M for TabSwift, and 10.72M for TabPFN-2.5.
The TabPFN-2.5 implementation uses 24 layers, a hidden dimension of 192, and three attention heads.
Within each architecture, Hard and \method{} use the same effective global batch sizes and learning-rate schedule.
Training follows the two-stage protocol in Section~\ref{sec:experiment-setup}.
Stage~2 starts from the corresponding condition's own Stage~1 model, so the downstream comparisons measure accumulated training effects.
Checkpoint selection is fixed across datasets.

\method{} uses a task-level entropy fraction of 0.15, all-row averaging, the binary-cap policy of Appendix~\ref{app:budget-policies}, selective conditional targets, and no external Gaussian-scale cap.
The model architecture and inference configuration are matched within each Hard/\method{} pair.

\subsection{Benchmark protocols and aggregation}
\label{app:evaluation-protocols}

\paragraph{TabArena.}
We evaluate 30 binary and eight multiclass datasets over all 594 official outer splits in the archived protocol.
The TFM runs use the default benchmark wrappers, eight-fold internal validation, and full-data refitting.
TabICLv2 uses an ensemble size of eight, TabSwift uses 16, and TabPFN-2.5 uses the archived wrapper defaults.
These internal procedures are distinct from the official outer test splits used to compute scores.
The five conventional baselines reuse the benchmark's tuned single-model results on the same outer splits.

\paragraph{TALENT.}
For each dataset, the original train, validation, and test partitions are combined and repartitioned with stratified five-fold cross-validation using a fixed seed.
TFMs use an ensemble size of eight.
Baseline hyperparameters are transferred from available TALENT configurations or selected with at most 30 trials or 20 minutes per dataset and method.
Additional searches for GBDTs, RealMLP, and TabM optimize binary ROC-AUC or multiclass log loss using the original train/validation split and two additional stratified holdouts from the combined training and validation data.
The selected configuration is reused across all five outer folds, with stopping iterations selected within each outer training fold before refitting.
This transferred-hyperparameter protocol is not nested cross-validation.
Log loss is computed from normalized class probabilities in FP64, with probabilities clipped below at $10^{-15}$.

The main comparison contains \TalentMainDatasets{} datasets shared by all methods, with \TalentMainFolds{} observed folds per method.
The extended paired comparisons use each architecture's complete supported set.

\paragraph{BeyondArena.}
The comparison with external baselines uses the 12 complete datasets shared by all methods: six binary and six multiclass tasks, comprising four temporal and eight grouped datasets.
All 77 outer splits have at most 100K training rows.
We use the official outer splits and default TFM preprocessing wrappers with no inner cross-validation; the reused tuned baselines retain their benchmark training and validation procedures.
The data revision is \texttt{2ecfe882ccfb814fc27c4de10a64ceefd5d7655c}.

The broader non-IID classification collection contains 21 datasets and 92 outer splits.
The complete paired scope covers all 21 datasets for TabICLv2, 18 for TabSwift, and 12 for TabPFN-2.5.
The larger-table runs use the ensemble settings recorded in their evaluation configurations, including ensemble size one for TabSwift above 100K training rows.
Only datasets with every official split completed for both conditions enter a paired comparison.
Incomplete or cancelled evaluations receive no imputed score.
The main-text OOD win rates in Table~\ref{tab:paired-win-rates} use these architecture-specific sets.
Table~\ref{tab:extended-paired} reports their win/loss/tie counts, together with the extended IID ranges.

% Generated by scripts/build_experiment_assets.py.
\begin{table}[!t]
\centering
\caption{Stage~2 paired results on each architecture's complete supported range. These ranges differ across architectures and are not used for cross-architecture rankings. W/L/T denotes wins/losses/ties of \method{} against Hard.}
\label{tab:extended-paired}
\small
\setlength{\tabcolsep}{4pt}
\begin{tabular}{@{}llrrr@{}}
\toprule
Benchmark & Architecture & Datasets & W/L/T & Win (\%) \\
\midrule
TabArena & TabICLv2 & 38 & 21/17/0 & 55.3 \\
TabArena & TabSwift & 38 & 25/13/0 & 65.8 \\
TabArena & TabPFN-2.5 & 38 & 27/11/0 & 71.1 \\
\addlinespace
TALENT & TabICLv2 & 200 & 132/65/3 & 66.0 \\
TALENT & TabSwift & 200 & 137/62/1 & 68.5 \\
TALENT & TabPFN-2.5 & 188 & 101/85/2 & 53.7 \\
\addlinespace
BeyondArena & TabICLv2 & 21 & 11/10/0 & 52.4 \\
BeyondArena & TabSwift & 18 & 15/3/0 & 83.3 \\
BeyondArena & TabPFN-2.5 & 12 & 7/5/0 & 58.3 \\
\addlinespace
\bottomrule
\end{tabular}
\end{table}

\paragraph{Metrics.}
Within a benchmark and training stage, all 11 methods are ranked on the same dataset/split set.
For method $m$ on split $s$ of dataset $d$, define $e_{m,d,s}$ as $1-\mathrm{AUC}$ for binary tasks or log loss for multiclass tasks.
The split-level version of TabArena's Improvability~\cite{erickson2025tabarena} is
\begin{equation}
    I_{m,d,s}=1-\frac{\min_j e_{j,d,s}}{e_{m,d,s}},
    \qquad I_{m,d,s}=0\ \text{if }e_{m,d,s}=0.
    \label{eq:experiment-improvability}
\end{equation}
We average split-level ranks and Improvability within each dataset, then average across datasets with equal weight.
This prevents datasets with more outer splits from receiving greater aggregate weight.
The best-error reference and the ranks are computed separately for each stage's method set.
The five conventional baselines retain the same predictions at both stages.

Paired wins instead compare the mean raw metric within each dataset.
Higher AUC or lower log loss counts as a win; differences within $10^{-12}$ count as ties.
Win rate is $W/(W+L+T)$.
Mean rank, Improvability, and win rate summarize different aspects of performance and need not agree in direction.

\subsection{Detailed OOD results}
\label{app:ood-results}

Table~\ref{tab:ood-results} gives the raw metrics and aggregate results on the 12 common BeyondArena datasets used in Table~\ref{tab:ood-wins}.
The two win-rate columns in Table~\ref{tab:ood-wins} use each architecture's paired outcomes: Hard wins are \method{} losses, while ties remain in both denominators.
At Stage~2 there are no ties on this common set.
\method{} lowers mean Improvability for all three architectures; TabSwift also improves mean rank.
TabM has the lowest Improvability among the 11 methods, while the Hard TabPFN-2.5 control has the lowest mean rank.

\begin{table}[!t]
    \centering
    \caption{Paired OOD win rates on 12 BeyondArena datasets.
    Each column reports its dataset-level wins against the other condition. There are no ties.}
    \label{tab:ood-wins}
    \begin{minipage}{0.48\linewidth}
        % Generated by scripts/build_experiment_assets.py.
\begingroup
\fontsize{9}{10.5}\selectfont
\setlength{\tabcolsep}{2.5pt}
\renewcommand{\arraystretch}{1.18}
\begin{tabular*}{\linewidth}{@{\extracolsep{\fill}}lrr@{}}
\toprule
Architecture & Hard & \method{} \\
\midrule
TabICLv2 & 50.0\% & 50.0\% \\
TabSwift & 25.0\% & \textbf{75.0\%} \\
TabPFN-2.5 & 41.7\% & \textbf{58.3\%} \\
\bottomrule
\end{tabular*}
\endgroup

    \end{minipage}
\end{table}

% Generated by scripts/build_experiment_assets.py.
\begin{table}[!t]
\centering
\caption{Stage~2 OOD results on the 12 common BeyondArena datasets (six binary, six multiclass). AUC is binary ROC-AUC; LL is multiclass log loss; $I$ is mean Improvability (\%); Rank is mean rank among 11 methods. Scores are averaged within each dataset and then equally across datasets. Win compares \method{} with its same-architecture Hard control. Bold marks the best AUC, LL, $I$, and Rank.}
\label{tab:ood-results}
\small
\setlength{\tabcolsep}{6pt}
\begin{tabular}{@{}lrrrrr@{}}
\toprule
Method & AUC$\uparrow$ & LL$\downarrow$ & $I\downarrow$ & Rank$\downarrow$ & Win (\%) \\
\midrule
CatBoost & 0.7806 & 0.8537 & 20.25 & 4.85 & \textemdash \\
LightGBM & 0.7830 & 0.8638 & 21.42 & 6.06 & \textemdash \\
XGBoost & 0.7791 & 0.9670 & 26.63 & 6.59 & \textemdash \\
RealMLP & 0.7943 & 0.8508 & 18.64 & 5.37 & \textemdash \\
TabM & 0.7894 & 0.8133 & \textbf{16.75} & 4.87 & \textemdash \\
\midrule
TabICLv2 (Hard) & 0.7527 & 0.8655 & 27.45 & 7.21 & \textemdash \\
TabICLv2 + \method{} & 0.7598 & 0.8557 & 27.33 & 7.26 & 50.0 \\
\addlinespace[2pt]
TabSwift (Hard) & 0.7716 & 0.9376 & 27.12 & 7.81 & \textemdash \\
TabSwift + \method{} & 0.7771 & 0.9304 & 25.05 & 7.11 & 75.0 \\
\addlinespace[2pt]
TabPFN-2.5 (Hard) & \textbf{0.8031} & 0.8064 & 18.31 & \textbf{4.19} & \textemdash \\
TabPFN-2.5 + \method{} & 0.8002 & \textbf{0.7880} & 17.95 & 4.68 & 58.3 \\
\bottomrule
\end{tabular}
\end{table}

\subsection{Gradient diagnostics}
\label{app:gradient-diagnostics}

The main grid fixes weights from the \method{} training trajectory at initialization, Stage~1 1K/10K/20K/30K, and Stage~2 3K/6K.
Each checkpoint is evaluated at total table lengths $L=1024$ and $8192$, yielding 42 settings.
The corresponding context/query counts are $819/205$ and $6553/1639$.
The archived field named \texttt{context} records $L$, rather than the number of labeled context rows.
Hard and \method{} share the model forward pass, query mask, and equal-table/equal-query loss normalization.
Dropout is disabled, and no optimizer step, weight decay update, or gradient scaler is applied.

For each setting, we draw $P=32$ independent pairs of batches, with 16 tables per batch.
Writing $G_a^{i,1}$ and $G_a^{i,2}$ for the two independent batch gradients under supervision $a\in\{\mathrm H,\mathrm G\}$ (Hard or \method{}), we clip each at global norm one:
$\widetilde G_a^{i,j}=G_a^{i,j}\min\{1,(\|G_a^{i,j}\|_2+10^{-6})^{-1}\}$.
The primary diagnostic uses
\begin{equation}
    \widehat v_a=\frac{1}{2P}\sum_{i=1}^{P}
    \left\|\widetilde G_a^{i,1}-\widetilde G_a^{i,2}\right\|_2^2,
    \qquad
    \widehat\rho=1-\frac{\widehat v_{\mathrm G}}{\widehat v_{\mathrm H}}.
    \label{eq:gradient-diagnostic-estimator}
\end{equation}
Each $\widehat v_a$ estimates the trace of the clipped batch-gradient covariance.
We resample complete batch pairs 4,000 times, keeping both supervision rules paired, to obtain pointwise 95\% bootstrap intervals for $\widehat\rho$ that quantify batch-sampling variability at fixed weights.

\paragraph{Task populations.}
The main grid samples tasks with tractable conditional query targets from the calibrated GraphSCM mixture, as determined by converter type, pre-draw calibration, and target-to-feature topology.
This selection does not depend on sampled query labels, loss, or gradients.
These tasks use terminal probabilities or Gaussian-channel posteriors and are labeled Tractable targets in Table~\ref{tab:gradient-controls}.
The table additionally evaluates the full calibrated mixture, channel-only tasks, and the same task set restricted to 128 supervised queries at Stage~2 6K, yielding 18 control settings.
Full-mixture query coverage is approximately 26.2\% at $L=1024$ and 30.7\% at $L=8192$; the remainder is largely hard-label fallback.
The diagnostic uses fixed table lengths and independent shape draws, so its batch distribution differs from the complete pretraining stream.
All Stage~2 6K diagnostics use the same final decay checkpoints as predictive evaluation.

% Generated by scripts/build_experiment_assets.py.
\begin{table}[!t]
\centering
\caption{Gradient diagnostics at the final checkpoints after global norm-one clipping of each whole-batch gradient. RB coverage is the fraction of supervised query rows using conditional targets. Intervals are pointwise 95\% batch-pair bootstrap intervals.}
\label{tab:gradient-controls}
\small
\setlength{\tabcolsep}{5pt}
\begin{tabular}{@{}llrcr@{}}
\toprule
Architecture & Pool & $L$ & Reduction (\%) & RB (\%) \\
\midrule
TabICLv2 & Full & 1,024 & $+1.37\;[-0.44,+3.23]$ & 26.2 \\
TabICLv2 & Channel & 1,024 & $+2.07\;[-0.77,+5.06]$ & 98.5 \\
TabICLv2 & Tractable targets & 1,024 & $+11.52\;[+6.72,+15.93]$ & 99.3 \\
TabICLv2 & Tractable targets, $Q=128$ & 1,024 & $+13.77\;[+9.50,+18.32]$ & 99.4 \\
TabICLv2 & Full & 8,192 & $+1.69\;[+0.81,+3.07]$ & 30.7 \\
TabICLv2 & Channel & 8,192 & $+1.99\;[-0.81,+5.68]$ & 98.3 \\
TabICLv2 & Tractable targets & 8,192 & $+6.80\;[+3.16,+11.24]$ & 99.2 \\
TabICLv2 & Tractable targets, $Q=128$ & 8,192 & $+18.64\;[+13.77,+23.76]$ & 99.2 \\
\addlinespace
TabSwift & Full & 1,024 & $+3.37\;[+2.36,+4.61]$ & 26.2 \\
TabSwift & Channel & 1,024 & $+6.82\;[+4.51,+9.32]$ & 98.5 \\
TabSwift & Tractable targets & 1,024 & $+20.91\;[+18.06,+24.03]$ & 99.3 \\
TabSwift & Tractable targets, $Q=128$ & 1,024 & $+22.71\;[+19.08,+26.28]$ & 99.4 \\
TabSwift & Full & 8,192 & $-0.002\;[-0.99,+0.90]$ & 30.7 \\
TabSwift & Channel & 8,192 & $+6.36\;[+2.08,+12.76]$ & 98.3 \\
TabSwift & Tractable targets & 8,192 & $+10.17\;[+4.53,+17.15]$ & 99.2 \\
TabSwift & Tractable targets, $Q=128$ & 8,192 & $+27.94\;[+22.21,+34.13]$ & 99.2 \\
\addlinespace
TabPFN-2.5 & Full & 1,024 & $+7.33\;[+5.78,+8.97]$ & 26.2 \\
TabPFN-2.5 & Channel & 1,024 & $+9.52\;[+8.26,+10.79]$ & 98.5 \\
TabPFN-2.5 & Tractable targets & 1,024 & $+26.04\;[+23.88,+28.30]$ & 99.3 \\
TabPFN-2.5 & Tractable targets, $Q=128$ & 1,024 & $+27.22\;[+24.87,+29.71]$ & 99.4 \\
TabPFN-2.5 & Full & 8,192 & $+4.13\;[+1.76,+8.76]$ & 30.7 \\
TabPFN-2.5 & Channel & 8,192 & $+4.57\;[+3.27,+5.94]$ & 98.3 \\
TabPFN-2.5 & Tractable targets & 8,192 & $+19.30\;[+15.19,+23.71]$ & 99.2 \\
TabPFN-2.5 & Tractable targets, $Q=128$ & 8,192 & $+34.41\;[+29.31,+39.80]$ & 99.2 \\
\addlinespace
\bottomrule
\end{tabular}
\end{table}

% !TeX root = ../simrb.tex
\FloatBarrier
\subsection{Functional components after clipping}
\label{app:module-analysis}

\paragraph{Additive decomposition.}
We partition the trainable parameter coordinates after applying one global clipping factor to each whole-batch gradient.
The estimate $\widehat v_{a,m}$ is obtained by restricting the squared distances in Equation~\eqref{eq:gradient-diagnostic-estimator} to group $m$.
The squared Euclidean norm and the covariance trace add over disjoint coordinate groups, so $\widehat v_a=\sum_m\widehat v_{a,m}$ without assuming independence between groups.
We report the Hard variance share $s_m$, within-component relative reduction $r_m$, and contribution to the whole-model reduction $c_m$:
\begin{equation}
    s_m=\frac{\widehat v_{\mathrm H,m}}{\widehat v_{\mathrm H}},\qquad
    r_m=100\left(1-\frac{\widehat v_{\mathrm G,m}}{\widehat v_{\mathrm H,m}}\right),\qquad
    c_m=100\frac{\widehat v_{\mathrm H,m}-\widehat v_{\mathrm G,m}}{\widehat v_{\mathrm H}}.
    \label{eq:variance-components}
\end{equation}
Here $r_m$ is defined for $\widehat v_{\mathrm H,m}>0$ and expressed as a percentage; $c_m$ is a percentage-point contribution to the whole-model reduction $R$.
The contributions satisfy
\begin{equation}
    c_m=s_mr_m,\qquad
    \sum_m c_m=R=100\left(1-\frac{\widehat v_{\mathrm G}}{\widehat v_{\mathrm H}}\right).
    \label{eq:module-contribution}
\end{equation}
We compute ratios after summing and averaging the unrounded batch-pair distances.
All groups and both supervision rules use the same 4,000 resamples of complete batch pairs.
Within each resample, group variances add to the full-model variance, shares sum to one, and contributions sum to the net reduction.
Tables~\ref{tab:variance-components-1024} and~\ref{tab:variance-components-8192} give all three statistics at 36K.
At these final checkpoints, the percentage of the net reduction contributed by components outside the prediction head is $100(R-c_{\mathrm{head}})/R$.

A component's contribution depends on both its Hard variance share and its relative reduction.
For example, at 36K and $L=1024$, TFicl has a larger relative reduction than TFcol, 26.85\% versus 9.05\%, yet contributes less to the whole-model reduction, 2.21 versus 7.87 percentage points.
TFcol accounts for 86.95\% of Hard variance, compared with 8.23\% for TFicl.

\paragraph{Functional partitions.}
TabICLv2 separates three backbone components and the prediction head.
TFcol includes the input projection, its label encoder, and all column-encoder blocks, including their inducing vectors and SSMax networks.
TFrow includes the row-interaction blocks, row CLS tokens, and output normalization.
TFicl includes the 12 ICL blocks, its label encoder, and output normalization.
The prediction decoder forms a separate group, shown as Pred.\ head in Figure~\ref{fig:gradient-stages}.

TabSwift uses row-wise attention (Row attn in Figure~\ref{fig:gradient-stages}), MLP, Encoders, the prediction head, and Other.
Its attention operates across row tokens, with each row embedded as one token and additional learned register tokens.
This group includes query, key, value, and output projections together with attention gates; MLP contains the block feed-forward layers.
Encoders contains feature and label projections, the prediction head contains the decoder, and Other contains normalization parameters and register tokens.
TabPFN-2.5 separates within-row attention, operating across features of one sample, from within-column attention, operating across samples in one feature column.
Its other groups are MLP, Encoders (feature, positional, and label encodings), the prediction head, and Other (learned thinking rows).
Membership follows parameter paths and is fixed across checkpoints and table lengths.

\paragraph{Parameter coverage.}
Every trainable parameter is assigned to exactly one component, with coverage checked across all 42 main diagnostic settings.
These statistics sum gradient variance over scalar coordinates without normalization by parameter count.

\begin{table}[!t]
\centering
\caption{Functional decomposition at the final checkpoints for $L=1024$. Entries are estimates with pointwise 95\% batch-pair bootstrap intervals. Shares and relative reductions are percentages; contributions are percentage points of whole-model reduction. Values are rounded only for display.}
\label{tab:variance-components-1024}
\footnotesize
\setlength{\tabcolsep}{3pt}
\begin{tabular*}{\linewidth}{@{\extracolsep{\fill}}lrrr@{}}
\toprule
Component & Hard share (\%) & Relative reduction (\%) & Contribution (pp) \\
\midrule
\multicolumn{4}{l}{\textbf{TabICLv2}} \\
TFcol & 86.95 [84.15, 89.29] & 9.05 [3.57, 13.94] & 7.87 [3.07, 12.10] \\
TFrow & 1.62 [1.29, 2.01] & 10.14 [-3.97, 19.36] & 0.16 [-0.07, 0.32] \\
TFicl & 8.23 [6.67, 10.16] & 26.85 [23.55, 29.96] & 2.21 [1.72, 2.85] \\
Prediction head & 3.20 [2.62, 3.94] & 39.90 [32.18, 47.73] & 1.28 [0.94, 1.73] \\
\addlinespace[3pt]
\multicolumn{4}{l}{\textbf{TabSwift}} \\
Row-wise attention & 31.91 [28.55, 35.09] & 20.01 [17.53, 22.73] & 6.38 [5.23, 7.66] \\
MLP & 45.79 [41.89, 50.15] & 19.95 [16.52, 23.60] & 9.13 [7.92, 10.43] \\
Encoders & 12.40 [10.90, 14.10] & 15.56 [11.59, 20.29] & 1.93 [1.56, 2.34] \\
Prediction head & 8.40 [6.72, 10.23] & 37.70 [29.43, 46.28] & 3.17 [2.20, 4.45] \\
Other & 1.51 [1.34, 1.67] & 19.90 [17.17, 22.92] & 0.30 [0.24, 0.37] \\
\addlinespace[3pt]
\multicolumn{4}{l}{\textbf{TabPFN-2.5}} \\
Within-row attention & 18.39 [17.68, 19.00] & 23.33 [20.67, 25.74] & 4.29 [3.74, 4.80] \\
Within-column attention & 50.14 [48.73, 51.58] & 24.08 [22.24, 25.96] & 12.07 [11.07, 13.10] \\
MLP & 17.34 [16.92, 17.74] & 23.19 [21.29, 25.08] & 4.02 [3.67, 4.34] \\
Encoders & 0.21 [0.14, 0.31] & 24.77 [11.80, 34.42] & 0.05 [0.02, 0.10] \\
Prediction head & 13.92 [12.51, 15.47] & 40.28 [33.67, 46.98] & 5.61 [4.36, 7.04] \\
Other & 0.00 [0.00, 0.00] & 3.60 [-2.24, 16.23] & 0.00 [-0.00, 0.00] \\
\addlinespace[3pt]
\bottomrule
\end{tabular*}
\end{table}

\begin{table}[!t]
\centering
\caption{Functional decomposition at the final checkpoints for $L=8192$. Entries are estimates with pointwise 95\% batch-pair bootstrap intervals. Shares and relative reductions are percentages; contributions are percentage points of whole-model reduction. Values are rounded only for display.}
\label{tab:variance-components-8192}
\footnotesize
\setlength{\tabcolsep}{3pt}
\begin{tabular*}{\linewidth}{@{\extracolsep{\fill}}lrrr@{}}
\toprule
Component & Hard share (\%) & Relative reduction (\%) & Contribution (pp) \\
\midrule
\multicolumn{4}{l}{\textbf{TabICLv2}} \\
TFcol & 93.89 [91.93, 95.20] & 5.72 [1.93, 10.24] & 5.37 [1.82, 9.54] \\
TFrow & 1.01 [0.79, 1.29] & 8.35 [5.78, 11.23] & 0.08 [0.06, 0.12] \\
TFicl & 3.88 [3.02, 5.17] & 19.73 [17.02, 22.79] & 0.77 [0.57, 1.07] \\
Prediction head & 1.23 [0.92, 1.71] & 47.23 [40.06, 54.10] & 0.58 [0.40, 0.87] \\
\addlinespace[3pt]
\multicolumn{4}{l}{\textbf{TabSwift}} \\
Row-wise attention & 19.55 [15.25, 24.69] & 10.28 [4.92, 16.72] & 2.01 [1.04, 3.36] \\
MLP & 61.87 [55.81, 66.55] & 8.91 [2.44, 16.41] & 5.51 [1.46, 10.42] \\
Encoders & 13.92 [12.42, 15.90] & 8.02 [2.40, 14.93] & 1.12 [0.33, 2.12] \\
Prediction head & 3.76 [2.76, 5.41] & 38.02 [29.22, 46.81] & 1.43 [0.88, 2.34] \\
Other & 0.90 [0.71, 1.12] & 11.59 [6.52, 17.81] & 0.10 [0.06, 0.17] \\
\addlinespace[3pt]
\multicolumn{4}{l}{\textbf{TabPFN-2.5}} \\
Within-row attention & 17.17 [15.44, 18.97] & 17.23 [13.83, 21.07] & 2.96 [2.43, 3.60] \\
Within-column attention & 56.67 [50.44, 62.21] & 15.30 [10.84, 19.67] & 8.67 [6.06, 11.57] \\
MLP & 14.02 [12.21, 15.63] & 15.93 [12.58, 19.84] & 2.23 [1.78, 2.87] \\
Encoders & 0.25 [0.11, 0.50] & 10.86 [6.37, 15.70] & 0.03 [0.01, 0.05] \\
Prediction head & 11.88 [8.88, 16.14] & 45.56 [36.77, 53.39] & 5.41 [3.72, 7.69] \\
Other & 0.00 [0.00, 0.00] & 7.24 [-6.28, 9.31] & 0.00 [-0.00, 0.00] \\
\addlinespace[3pt]
\bottomrule
\end{tabular*}
\end{table}

\FloatBarrier

\subsection{Budget-sensitivity protocol}
\label{app:beta-sensitivity}

The TabSwift sensitivity runs use fixed task-level budgets $\beta\in\{0.05,0.15,0.30\}$ throughout both stages, following the training protocol in Section~\ref{sec:experiment-setup}.
Each Stage~2 run starts from its own budget's Stage~1 checkpoint.
Model and data seeds are fixed across budgets.
All runs use all-row averaging, the binary cap, selective fallback, and no external noise cap.
The $\beta=0.15$ results reuse the main run.

All budgets use the same 594 TabArena outer splits, default preprocessing, internal validation, and ensemble size 16.
Stage~2 TALENT results use the same 188 datasets as Table~\ref{tab:benchmark-results}, comprising 120 binary and 68 multiclass tasks, with five stratified folds and ensemble size eight.
All 940 folds per condition are observed; checkpoint selection is fixed across datasets.
Binary AUC and multiclass log loss are averaged separately after computing each dataset's fold mean.

Table~\ref{tab:beta-sensitivity-full} reports the full Stage~2 results, including paired win rates.
Runs follow the protocol in Section~\ref{sec:experiment-setup}.

% Generated by scripts/build_experiment_assets.py.
\begin{table}[!t]
\centering
\caption{TabSwift-\method{} entropy-budget sensitivity at Stage~2. AUC and LL denote mean binary ROC-AUC and multiclass log loss; each dataset receives equal weight. Bold marks the best mean for each metric. Win is the percentage of datasets won against Hard, including ties in the denominator. Dataset sets match Table~\ref{tab:benchmark-results}.}
\label{tab:beta-sensitivity-full}
\small
\setlength{\tabcolsep}{5pt}
\begin{tabular}{@{}lrrrrrr@{}}
\toprule
& \multicolumn{3}{c}{TabArena (38)} & \multicolumn{3}{c}{TALENT (188)} \\
\cmidrule(lr){2-4}\cmidrule(l){5-7}
Budget & AUC$\uparrow$ & LL$\downarrow$ & Win$\uparrow$ & AUC$\uparrow$ & LL$\downarrow$ & Win$\uparrow$ \\
\midrule
Hard & 0.8420 & 0.2848 & \textemdash & 0.8713 & 0.4267 & \textemdash \\
$\beta=0.05$ & \textbf{0.8427} & 0.2824 & 68.4 & \textbf{0.8731} & 0.4250 & 64.9 \\
$\beta=0.15$ & 0.8425 & \textbf{0.2811} & 65.8 & 0.8726 & 0.4243 & 67.0 \\
$\beta=0.30$ & 0.8423 & 0.2822 & 60.5 & 0.8727 & \textbf{0.4240} & 63.3 \\
\bottomrule
\end{tabular}
\end{table}

\end{document}